\documentclass[final,12pt]{colt2025} %

\usepackage[utf8]{inputenc}
\usepackage[T1]{fontenc}
\usepackage{graphicx}
\usepackage{color}
\usepackage{enumerate}
\usepackage{verbatim}
\usepackage{thmtools,thm-restate}
\usepackage{wrapfig}
\usepackage{xspace}
\usepackage{amsfonts}
\usepackage{bbm}
\usepackage{algorithm, algpseudocode, caption, subcaption}
\usepackage{afterpage}

\definecolor{darkgreen}{rgb}{0,0.5,0}
\usepackage{hyperref}
\hypersetup{
    unicode=false,          %
    colorlinks=true,        %
    linkcolor=blue,          %
    citecolor=darkgreen,        %
    filecolor=magenta,      %
    urlcolor=cyan           %
}

\usepackage[disableredefinitions,roman]{complexity}
\usepackage[framemethod=tikz]{mdframed}
\global\mdfdefinestyle{myframe}{leftmargin=.75in,rightmargin=.75in,linecolor=black,linewidth=1.5pt,innertopmargin=10pt,innerbottommargin=10pt} 
\usepackage{framed}
\usepackage{nicefrac}

\date{}

\newtheorem{claim}[theorem]{Claim}

\usepackage[capitalize, nameinlink]{cleveref}
\crefname{theorem}{Theorem}{Theorems}
\Crefname{lemma}{Lemma}{Lemmas}
\Crefname{alg}{Algorithm}{Algorithms}
\Crefname{claim}{Claim}{Claims}
\Crefname{subclaim}{Claim}{Claims}
\Crefname{infclaim}{Claim}{Claims}
\Crefname{observation}{Observation}{Observations}
\Crefname{invariant}{Invariant}{Invariants}
\Crefname{algorithm}{Algorithm}{Algorithms}

\newcommand{\tdmin}{\operatorname{TS}_{\min}}

\DeclareMathOperator*{\argmin}{arg\,min}

\allowdisplaybreaks[1]

\newcommand{\mcC}{\mathcal{C}}
\newcommand{\mcF}{\mathcal{F}}
\newcommand{\mcX}{\mathcal{X}}

\newcommand{\mcA}{\mathcal{A}}
\newcommand{\mcB}{\mathcal{B}}
\newcommand{\AND}{\operatorname{AND}}

\title[Lower Bounds for Greedy Teaching Set Constructions]{Lower Bounds for Greedy Teaching Set Constructions}
\usepackage{times}

\coltauthor{%
 \Name{Spencer Compton} \Email{comptons@stanford.edu}\\
 \addr Stanford University
 \AND
 \Name{Chirag Pabbaraju} \Email{cpabbara@stanford.edu}\\
 \addr Stanford University%
 \AND
 \Name{Nikita Zhivotovskiy} \Email{zhivotovskiy@berkeley.edu}\\
 \addr UC Berkeley%
}
\graphicspath{{img/}}
\begin{document}

\maketitle

\begin{abstract}
A fundamental open problem in learning theory is to characterize the best-case teaching dimension $\tdmin$ of a concept class $\mcC$ with finite VC dimension $d$. Resolving this problem will, in particular, settle the conjectured upper bound on Recursive Teaching Dimension posed by \citeauthor*{simon2015open} (COLT 2015). Prior work used a natural greedy algorithm to construct teaching sets recursively, thereby proving upper bounds on $\tdmin$, with the best known bound being $O(d^2)$ (\citeauthor*{Hu2017}, COLT 2017). In each iteration, this greedy algorithm chooses to add to the teaching set the $k$ labeled points that restrict the concept class the most. In this work, we prove lower bounds on the performance of this greedy approach for small $k$. Specifically, we show that for $k = 1$, the algorithm does not improve upon the halving-based bound of $O(\log(|\mcC|))$. Furthermore, for $k = 2$, we complement the upper bound of $O\left(\log(\log(|\mcC|))\right)$ from \citeauthor*{Moran2015} (FOCS 2015) with a matching lower bound. Most consequentially, our lower bound extends up to $k \le \lceil c d \rceil$ for small constant $c>0$: suggesting that studying higher-order interactions may be necessary to resolve the conjecture that $\tdmin = O(d)$.
\end{abstract}

\section{Introduction}
One of the well-known open problems in learning theory is to characterize the size of the smallest teaching set, called the \emph{best-case teaching dimension} and denoted by $\tdmin$, for a concept class of finite VC dimension. More specifically, given a concept class $\mcC$ of VC dimension $d$ defined on a finite domain $\mathcal{X}$, we ask whether there exists a concept $c \in \mcC$ whose teaching set (i.e., a set of {domain elements} on which $c$ differs from every other concept in $\mcC$) has size $O(d)$. Addressing this question would immediately solve the COLT Open Problem posed by \cite{simon2015open}, where the conjecture is that the \emph{Recursive Teaching Dimension} is $O(d)$. We refer to \citep{doliwa2014recursive} for exact definitions and a detailed account of the topic.

The early works on determining the teaching dimension trace back to \cite{cover1965geometrical}, who showed that $\tdmin = O(d)$ for the concept class induced by half-spaces. In fact, Cover used the geometric structure of half-spaces to bound the \emph{average-case teaching dimension} by $O(d)$, which implies the desired $\tdmin = O(d)$. A generalization of this approach was studied in \citep{doliwa2014recursive}, where the authors analyzed \emph{shortest-path closed} concept classes and showed that their average teaching set has size $O(d)$, implying the same bound on $\tdmin$. However, there is a known limitation to using average teaching set size to bound $\tdmin$, since the average-case teaching dimension need not be bounded by the VC dimension \citep{kushilevitz1996witness, doliwa2014recursive}. A related line of work also leverages additional structure on the concept class, such as intersection-closed classes \citep{kuhlmann1999teaching, doliwa2014recursive}, which likewise guarantee $\tdmin = O(d)$. However, these structural assumptions are class-specific and do not provide a general solution to bounding $\tdmin$ for arbitrary VC classes.

A line of work exploiting \emph{no properties} of the concept class beyond its finite VC dimension was initiated by \cite{kuhlmann1999teaching}, who proved that $\tdmin = 1$ for classes with $d = 1$. For general $d$, there is a simple halving-based bound of $\tdmin = O\bigl(\log(|\mcC|)\bigr)$, which is sensitive to the size of $\mcC$. The first result to improve on this basic result was given by \cite*{Moran2015}, who showed that $\tdmin = O\bigl(d 2^d \log(\log(|\mcC|))\bigr)$. In fact, the dependence on $|\mcC|$ is unnecessary and the bound of \cite{Moran2015} has been subsequently improved to $O(d2^d)$ by \cite*{Chen2016}, and later to the current state-of-the-art bound $O(d^2)$ by \cite*{Hu2017}.

Remarkably, the proofs of all the above-mentioned general bounds on $\tdmin$ rely on the same greedy strategy, formally given as \cref{algo:greedy}. The idea is as follows: pick an integer $k$ (the “greediness” parameter), and at each step find a subset $T^\star \subset \mathcal{X}$ of size $k$ together with a binary pattern $b^\star \in \{0,1\}^k$ such that the number of concepts in $\mcC$ agreeing with $b^\star$ on $T^\star$ is minimized (but still nonempty). Then add $T^\star$ to the current teaching set, and restrict $\mcC$ to those concepts matching \(b^\star\) on \(T^\star\). {This restriction is denoted by $\mcC|_{T^\star, b^\star}$.} Repeat this procedure until the remaining concept class contains exactly one concept, which is then characterized by the constructed teaching set.

As a proof of concept, the result of \cite{kuhlmann1999teaching} for $d = 1$ follows from applying~\cref{algo:greedy} with $k = 1$. The definition of the VC dimension in his proof ensures that the algorithm finishes constructing the teaching set right after the first iteration, thereby establishing $ \tdmin = 1$ when $d = 1$. Similarly, the analysis in \citep{Moran2015} applies~\cref{algo:greedy} with $k = 2$, 
the bound of \cite{Chen2016}\footnote{Using a fixed $k$ will incur a slightly worse guarantee of $O(d^2 2^d)$, compared to their $O(d 2^d)$ guarantee with exponentially decreasing $k$.} with $k = 2^d (d - 1) + 1$, 
while the bound of \cite{Hu2017} can be modified slightly so that it corresponds to the application of the algorithm with $k = O(d)$. 

The main aim of this work is to study the limitations of \cref{algo:greedy} as a general method for constructing teaching sets for different values of $k$. Prior to this work, even for $k = 1$, the size of the constructed teaching sets remained unclear for general concept classes. We show that, in the worst case, \cref{algo:greedy} with $k = 1$ cannot achieve any better dependence on the size of the teaching set than the halving-based bound $O(\log(|\mcC|))$. Since our focus is on lower bounds, we establish the tightness of some previous results, including the somewhat exotic $O(\log(\log(|\mcC|)))$ dependence on the teaching set size in \citep{Moran2015} when the algorithm is run with $k = 2$.  We summarize our findings in the following table:

\begin{table}[h]
\renewcommand*{\arraystretch}{1.3}
\centering
\setlength{\tabcolsep}{8pt} %
\small
\begin{tabular}{|c|p{5cm}|p{5cm}|}
\hline
& \textbf{Upper Bound} & \textbf{Lower Bound} \\
\hline
$k=1$ & 
$O\bigl(\log |\mcC|\bigr)$; folklore & 
$\Omega\bigl(\log |\mcC|\bigr)$; \cref{theorem:rectangle-lower-bound} \\
\hline
$2 \leq k \leq \lceil cd \rceil$ for some $c > 0$ & 
$O_{d,k}\bigl(\log(\log |\mcC|)\bigr)$; \citep{Moran2015}\footnotemark & 
$\Omega_{d,k}\bigl(\log(\log |\mcC|)\bigr)$; \cref{theorem:general-k-lower-bound} \\
\hline
$k = \lceil c' d \rceil$ for a larger $c' > 0$ & 
$O\bigl(d^2\bigr)$; \citep{Hu2017} & 
$\Omega\bigl(d\bigr)$; trivial \\
\hline
\end{tabular}
\vspace{1ex}
\noindent $O_{d,k}(\cdot)$ denotes ignoring multiplicative factors in $d,k$. Prior to our work, only the trivial lower bound of $\Omega(d)$ (e.g., $\mcC = \{0, 1\}^d$) was known for all settings.
\end{table}

\footnotetext{The upper bound in \citep{Moran2015} is shown for $k = 2$, but their proof implies the same bound for all $k > 2$.}

{The main consequence of our result is to refute the plausible-seeming agenda of resolving the $\tdmin = O(d)$ conjecture by more sharply analyzing the natural greedy~\cref{algo:greedy} for smaller $k=o(d)$; we show that ~\cref{algo:greedy} may fail to construct small teaching sets even when $k = \lceil c\,d \rceil$ for a sufficiently small absolute constant $c > 0$.}
This suggests that a better study of higher-order interactions, or some approach that exploits the overall structure of the concept class, might be necessary. 
Note how this does not imply the greedy approach is suboptimal for large $k$; by definition, the greedy algorithm optimally finishes in one round when $k=\tdmin$. 
Our construction reveals an unexpected phase transition: if indeed $\tdmin = O(d)$ holds, then by selecting \emph{a sufficiently large} constant $c' > 0$, \cref{algo:greedy} with $k = \lceil c'\,d \rceil$ should be capable of constructing the desired teaching set in a single iteration, as dictated by its definition. However, our findings show that for $k = \lceil c\,d \rceil$ with \emph{a small} constant $c$, the algorithm fails to achieve the desired bound.

The remainder of the paper is organized as follows. In \cref{sec:lowerkone} we analyze the basic case of \( k = 1 \), and in~\cref{sec:largerk} we consider other values of \( k \). %
Both concept classes have small $\tdmin$: they are barriers for the greedy algorithm with small $k$, not for the general $\tdmin = O(d)$ conjecture (see \cref{app:tdmin}). To keep the paper concise, it is assumed that the reader is familiar with standard definitions and results, such as VC dimension; further details can be found in standard textbooks.

\section{Lower bound for $k=1$}
\label{sec:lowerkone}
Our first main result is a lower bound on the size of the teaching set returned by \cref{algo:greedy} and the procedure \textsc{Greedy} for $k = 1$. The geometric construction employed in this proof serves both as a foundation and as an illustration for our analysis when $k \ge 2$, which is presented in \cref{sec:largerk}. In fact, the proof of~\cref{theorem:rectangle-lower-bound} is driven by the illustration in \Cref{fig:rectangles-lb}. Once the construction and the procedure of \cref{algo:greedy} for $k = 1$ are understood, the remainder of the section is devoted to formalizing the intuitively clear argument.

Before we present our statement and construction, we recall that each rectangle classifies points in its interior and along its border as $1$, and points in its exterior as $0$.
\begin{algorithm}[t]
\SetAlgoLined
\caption{Greedy algorithm for constructing teaching sets}\label{algo:greedy}
\KwIn{Concept class $\mcC$, greediness parameter $k$}
\KwOut{Teaching set $S$ for some concept $c \in \mcC$}
\SetKwFunction{Greedy}{Greedy}
\SetKwProg{Proc}{Procedure}{:}{}
\Proc{\Greedy{$\mcC, k$}}{
    $S \gets \emptyset$\; \par
    \While{$|\mcC| > 1$}{\label{line:greedy-loop}
        $T^\star, b^\star \gets \displaystyle \argmin_{\substack{T \subseteq \mcX,\, 1 \le |T| \le k \\ b \in \{0,1\}^{|T|} \\ \left|\mcC|_{T,b}\right|>0}} \bigl|\mcC|_{T, b}\bigr|$\; \label{line:greedy-choice}
        \textcolor{blue}{\Comment{Greedily compute smallest restriction\footnotemark}}\; \par
        $S \gets S \cup T^\star$\; \label{line:append-to-S}
        \textcolor{blue}{\Comment{Add $T^\star$ to teaching set}}\; \par
        $\mcC \gets \mcC|_{T^\star, b^\star}$\; \label{line:restrict-C}
        \textcolor{blue}{\Comment{Restrict $\mcC$ to teaching set constructed so far}}\; \par
    }
    \Return{$S, c$}\; {(where $\mcC=\{c\}$)}\; \par
}
\footnotetext{Here, we may assume that ties are broken in favor of a $T$ that has smaller size.}
\end{algorithm}
\footnotetext{Here, we may assume that ties are broken in favor of a $T$ that has smaller size.}

\begin{theorem}[Rectangles Lower Bound for $k=1$]
    \label{theorem:rectangle-lower-bound}
    There exists a family $\{\mcF_N\}$ of concept classes (here $N = 1, 2, \ldots $) such that
    \begin{enumerate}
        \item $\mcF_N$ consists of indicators of axis-aligned rectangles in $\mathbb{R}^2$ (i.e., VC dimension at most 4), %
        \item $\mcF_N$ has size $2^{\Theta(N)}$ and is defined on a domain $\mcX \subseteq \mathbb{R}^2$ of size $2^{\Theta(N)}$,
        \item \textsc{Greedy}$(\mcF_N, 1)$ returns a teaching set of size at least $N = \Omega(\log(|\mcF_N|))$.
    \end{enumerate}
\end{theorem}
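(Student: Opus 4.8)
The plan is to find a structural property of $\mcF_N$ guaranteeing that each iteration of \textsc{Greedy}$(\cdot,1)$ can only discard a bounded fraction of the surviving concepts. Say that a concept class $\mcC$ on domain $\mcX$ is \emph{$\alpha$-balanced} (for a fixed $\alpha\in(0,1/2]$) if for every $x\in\mcX$ the number of concepts of $\mcC$ labeling $x$ by $1$ equals $0$, equals $|\mcC|$, or lies in $[\alpha|\mcC|,(1-\alpha)|\mcC|]$. If $\mcC$ is $\alpha$-balanced then every nonempty $\mcC|_{x,b}$ has size at least $\alpha|\mcC|$, so no matter which minimizer \cref{line:greedy-choice} picks, $|\mcC|$ shrinks by a factor at most $1/\alpha$ — in particular ties cause no trouble. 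Hence if every class arising along the run is $\alpha$-balanced, then after $t$ iterations the surviving class has size $\geq|\mcF_N|\,\alpha^{t}$, so \textsc{Greedy} cannot terminate before iteration $\log_{1/\alpha}|\mcF_N|$. I will design the construction so that $|\mcF_N|=(1/\alpha)^{\Theta(N)}=2^{\Theta(N)}$, which then gives a teaching set of size $\geq N=\Omega(\log|\mcF_N|)$ (item 3); item 1 is immediate because the concepts are indicators of axis-aligned rectangles in $\mathbb{R}^2$.

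What remains is to build $\mcF_N$ and to propagate ``$\alpha$-balanced'' along the run. I would define $\{\mcF_N\}$ recursively, multiplying the number of concepts by a fixed constant per level: $\mcF_N$ is a constant number of rescaled copies of $\mcF_{N-1}$ placed in mutually far-apart regions of the plane, together with $O(1)$ new ``coordinating'' domain points that split the union into pieces of controlled relative size. Placing the copies far apart is what lets ``$\alpha$-balanced'' survive a restriction: a query that is interesting for one copy lies in none of the concepts contributed by the others, and restricting onto the concepts of a single copy (or a controlled fraction thereof) yields, up to rescaling, a class again in the family, so the induction closes. The resulting class has $\tdmin=O(1)$ — this already follows from the $O(d^2)$ bound of \citep{Hu2017} since here $d\leq 4$, and an explicit small teaching set is exhibited in \cref{app:tdmin} — so $\mcF_N$ is a barrier for the greedy algorithm, not for the $\tdmin=O(d)$ conjecture. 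Working in the plane rather than on a line appears essential: the natural one-dimensional attempts (nested intervals, or overlapping intervals in a staircase) all contain an extremal member that a single point isolates, so \textsc{Greedy} would finish in one iteration; two dimensions are what let the extremal concepts be hidden inside the overlap of the others.

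The step I expect to be the main obstacle is realizing all of this with actual rectangles: making the coordinating points $\alpha$-balanced with respect to the whole class rather than merely within one copy, and ruling out any accidental point lying in exactly one — or all but one — of the surviving rectangles at every stage of the run. This is delicate because axis-aligned rectangles in the plane have only four degrees of freedom, so the class is very far from shattering its domain; indeed, since $n$ points carry only $O(n^4)$ distinct rectangle patterns, realizing $2^{\Theta(N)}$ distinct concepts already forces $|\mcX|=2^{\Omega(N)}$ (explaining item 2), and the geometry — captured by \cref{fig:rectangles-lb} — has to be arranged around this rigidity. Once the picture is pinned down, the rest (writing out the domain, checking all concepts are distinct on it, the size bounds, and the case analysis that a restriction of an $\alpha$-balanced class in the family is again one) is routine bookkeeping.
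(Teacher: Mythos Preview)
Your $\alpha$-balanced framework is a reasonable lens, but the construction you sketch --- far-apart copies of $\mcF_{N-1}$ plus $O(1)$ extra domain points --- does not realize it, and the feature you name as the key (``a query that is interesting for one copy lies in none of the concepts contributed by the others'') is precisely what breaks it. If the copies $A_1,\dots,A_c$ are disjoint and far apart, then for $x$ in $A_i$'s region we have $|\mcF_N(x,1)|=|A_i(x,1)|$; hence the minimum positive label-$1$ count of $\mcF_N$ equals that of a single copy $\mcF_{N-1}$, and by recursion stays fixed at whatever constant the base class provides. \textsc{Greedy} will select such a minimizing $x$ with label $1$ (label-$0$ counts are at least $(c-1)|\mcF_{N-1}|$, far larger), collapsing $\mcF_N$ in one step to a class of that fixed constant size, after which $O(1)$ further steps suffice regardless of $N$. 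Extra domain points only add options for greedy and can therefore only lower its minimum, so ``coordinating points'' cannot repair this. Your ``induction closes'' step does not close: one greedy step on $\mcF_N$ is one greedy step \emph{into} a copy, not one step \emph{followed by} a fresh greedy run on $\mcF_{N-1}$.

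The paper's construction rests on the opposite principle: the levels must interact. Concepts at level $i$ come in two equal halves --- one containing every point of $V_1\cup\dots\cup V_{i-1}$ and one containing none of them --- so every lower-level point is labeled $1$ by exactly half of each higher level and remains near-balanced. Meanwhile a designated center $z_i$ lies in every concept of level $i$ but in no concept of a lower level, and the level sizes $|\mcC_i|$ grow fast enough that all lower levels together are negligible next to $|\mcC_{N-i}|$. These three ingredients force greedy's minimizer at round $i$ to be $(z_{N-i},0)$, which peels off exactly $\mcC_{N-i}$ and leaves $\bigcup_{j<N-i}\mcC_j$ for the next round; that invariant is what yields $N$ rounds. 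The overlap of higher-level rectangles with lower-level points --- not their separation --- is the structural idea you are missing.
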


The remainder of the section is devoted to the proof of \cref{theorem:rectangle-lower-bound}.

\begin{figure}[t]
    \centering
    \includegraphics[scale=0.4]{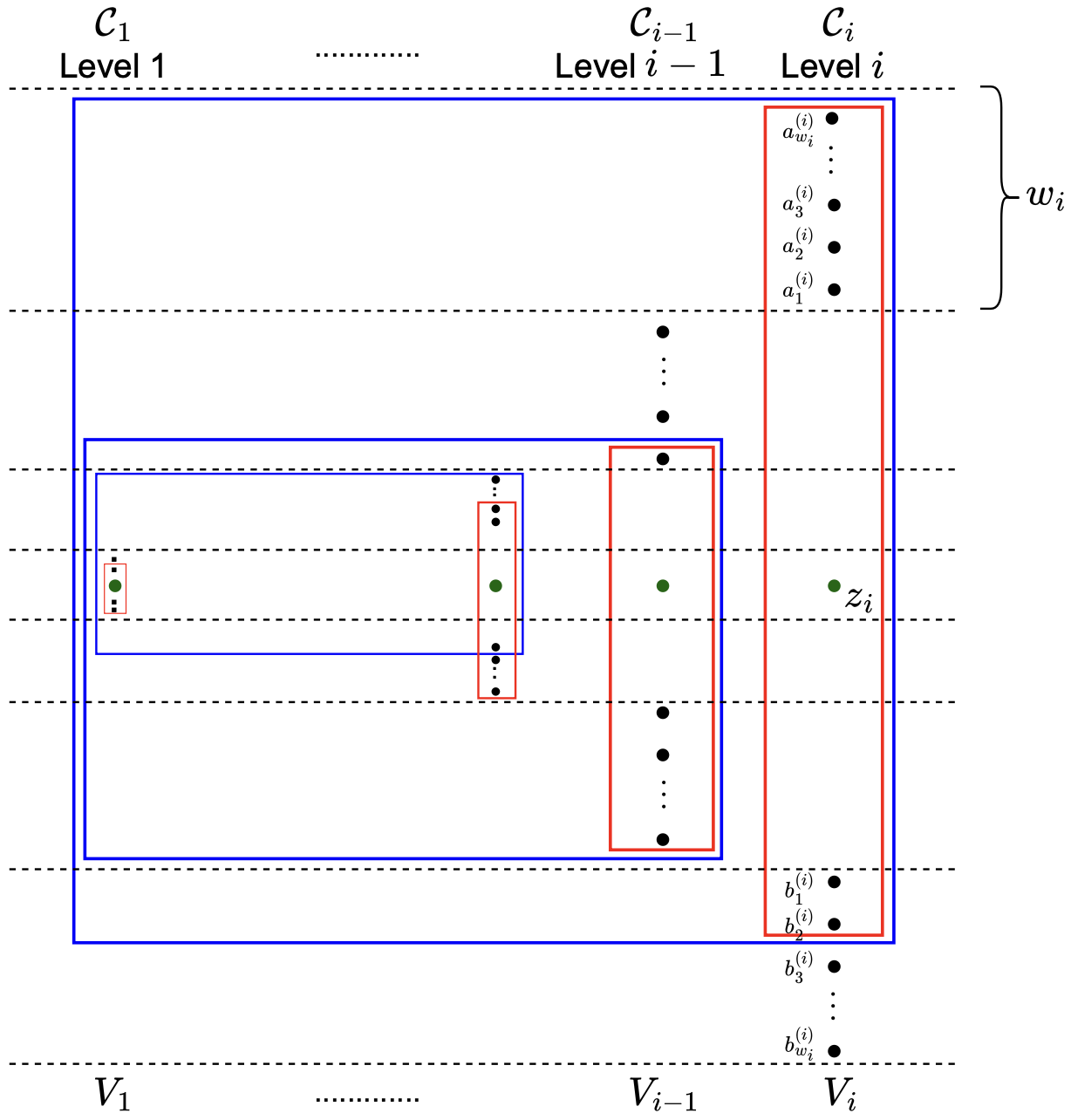}
    \caption{The arrangement of the point sets $V_1 \cup V_2 \cup \dots$ in the two-dimensional plane. The black horizontal dashed lines delineate the ``vertical ranges'' of these sets. The red rectangles denote concepts in $\mcC_i^{(\text{up, 1})}$ and $\mcC_i^{(\text{down, 1})}$, whereas the blue rectangles denote concepts in $\mcC_i^{(\text{up, 2})}$ and $\mcC_i^{(\text{down, 2})}$. For example, observe how the red rectangle in Level $i$, which belongs to $\mcC_i^{(\text{up, 1})}$, is enlarged to include all the points in $V_1 \cup \dots \cup V_{i-1}$, and yields the corresponding blue rectangle from $\mcC_i^{(\text{up, 2})}$.}
    \label{fig:rectangles-lb}
\end{figure}
\paragraph{Construction of the class.} We first describe the construction of the concept class $\mcF_N$ for every $N \ge 1$. %
\textit{Domain.} Consider a collection of sets of points $V_1 \cup \dots \cup V_N$. Each $V_i$ consists of a center point $z_i$, $w_i \triangleq 2^{10i}$ points extending vertically up, and $w_i$ points extending vertically down; thus, $|V_i|=2w_i+1$. Each set $V_i$ will be placed horizontally next to $V_{i+1}$ (see \Cref{fig:rectangles-lb}). The domain $\mcX \subseteq \mathbb{R}^2$ will precisely be $\mcX = V_1 \cup V_2 \cup \dots \cup V_N$, so that 
\begin{align*}
    &|\mcX|=\sum_{i=1}^N|V_i|=\sum_{i=1}^N(2^{10i+1}+1) \le \sum_{i=1}^N2^{10i+2} \le 2^{10N+3}, \\
    \text{and also }&|\mcX| = \sum_{i=1}^N(2^{10i+1}+1) \ge 2^{10N+1}.
\end{align*}
Thus, $|\mcX|=2^{\Theta(N)}$ as required. 

\textit{Concept class.} We now describe the concept class $\mcF_N$. We construct $\mcF_N$ as the union of $\mcC_1,\mcC_2,\dots,\mcC_N$, where we think of $\mcC_i$ to be defined at the ``$i^\text{th}$ level''.  It is helpful to refer to \Cref{fig:rectangles-lb}. To define $\mcC_i$, denote the $w_i$ points vertically above the center $z_i$ as $a^{(i)}_1,\dots,a^{(i)}_{w_i}$, and those vertically below $z_i$ as $b^{(i)}_1,\dots,b^{(i)}_{w_i}$. Let $\mcC_i^{(\text{up, 1})}$ consist of all rectangles that precisely contain $a^{(i)}_1,\dots,a^{(i)}_{w_i}$, the center $z_i$, and then additionally a (potentially empty) prefix of $b^{(i)}_1,\dots,b^{(i)}_{w_i}$. Similarly, let $\mcC_i^{(\text{down, 1})}$ consist of all rectangles that precisely contain $b^{(i)}_1,\dots,b^{(i)}_{w_i}$, the center $z_i$, and then additionally a (potentially empty) prefix of $a^{(i)}_1,\dots,a^{(i)}_{w_i}$. We have that $|\mcC_i^{(\text{up, 1})}|=|\mcC_i^{(\text{down, 1})}|=w_i+1$, and both contain a common rectangle that contains all the points in $V_i$.

Consider enlarging every rectangle in $\mcC_i^{(\text{up, 1})}$ to additionally  include all the points in $V_1 \cup V_2 \cup \dots \cup V_{i-1}$. These enlarged rectangles form $\mcC_i^{(\text{up, 2})}$. Similarly, enlarge every rectangle in $\mcC_i^{(\text{down, 1})}$ to include all the points in $V_1 \cup  V_2 \cup \dots \cup V_{i-1}$. These enlarged rectangles form $\mcC_i^{(\text{down, 2})}$. Again, we have that $|\mcC_i^{(\text{up, 2})}|=|\mcC_i^{(\text{down, 2})}|=w_i+1$, and both contain a common rectangle that contains all the points in $V_1 \cup \dots \cup V_i$.

The concept class $\mcC_i$ is then simply $\mcC_i^{(\text{up, 1})} \cup \mcC_i^{(\text{up, 2})} \cup \mcC_i^{(\text{down, 1})} \cup \mcC_i^{(\text{down, 2})}$. Subtracting out the common concepts, we have that $|\mcC_i|=4(w_i+1)-2=4w_i+2$. This gives us that
\begin{align*}
    &|\mcF_N| = \sum_{i=1}^N|\mcC_i| = \sum_{i=1}^N (4w_i+2) = \sum_{i=1}^N (2^{10i+2}+2) \le \sum_{i=1}^N 2^{10i+3} \le 2^{10N+4}, \\
    \text{and also }&|\mcF_N| = \sum_{i=1}^N (2^{10i+2}+2) \ge 2^{10N+2}.
\end{align*}
Thus, $|\mcF_N|=2^{\Theta(N)}$ also as required. Note also that every rectangle in $\mcC_i$ contains the center $z_i$, and either all the points in $V_1 \cup V_2 \cup \dots \cup V_{i-1}$ or none of them. Additionally, any point in $V_1 \cup V_2 \cup \dots \cup V_{i-1}$ is contained in exactly half the rectangles in $\mcC_i$, and any point in $V_i$ that is not the center is contained in at least $2w_i+2$ rectangles in $\mcC_i$.

The following property of the construction will also be useful ahead. It says that the number of concepts in the $i^\text{th}$ level is much larger than the \textit{total} number of concepts in all lower levels.

\begin{claim}[A level dominates all lower levels]
    \label{claim:level-domination}
    For any $i \in \{2,\dots,N\}$,
    \begin{align*}
        \sum_{j=1}^{i-1}|\mcC_j| <  \frac{1}{2}|\mcC_i|.
    \end{align*}
\end{claim}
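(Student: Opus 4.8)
The plan is to substitute the closed form $|\mcC_j| = 4w_j + 2 = 2^{10j+2} + 2$ established in the construction and then bound the resulting geometric sum, which is dominated by its top term. Concretely, I would first write
\[
\sum_{j=1}^{i-1} |\mcC_j| \;=\; \sum_{j=1}^{i-1}\bigl(2^{10j+2} + 2\bigr) \;=\; 2(i-1) \;+\; \sum_{j=1}^{i-1} 2^{10j+2}.
\]
The remaining sum is geometric with common ratio $2^{10}$, so
\[
\sum_{j=1}^{i-1} 2^{10j+2} \;\le\; 2^{10(i-1)+2}\cdot\frac{2^{10}}{2^{10}-1} \;<\; 2\cdot 2^{10(i-1)+2} \;=\; 2^{10i-7}.
\]
Since the trivial bound $2(i-1) \le 2^{10i-7}$ holds for every $i \ge 2$ (the right-hand side is exponential in $i$ while the left is linear, and already $2^{10i-7}\ge 2^{13}$ when $i\ge 2$), we conclude $\sum_{j=1}^{i-1}|\mcC_j| < 2^{10i-6}$.

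On the other side, $\tfrac12 |\mcC_i| = \tfrac12\bigl(2^{10i+2}+2\bigr) = 2^{10i+1}+1 > 2^{10i+1}$. Comparing the two estimates, $2^{10i-6} < 2^{10i+1}$ with a factor-of-$2^7$ margin to spare, which gives the claimed strict inequality $\sum_{j=1}^{i-1}|\mcC_j| < \tfrac12|\mcC_i|$.

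The argument is a routine geometric-series estimate, and there is no real obstacle: the only mild point of care is carrying the additive $+2$ terms through the summation, but the choice $w_i = 2^{10i}$ makes the growth so rapid that each level dwarfs the total of all lower levels with enormous slack, so no sharp accounting is needed. (Indeed, the proof shows a much stronger bound than the stated factor of $\tfrac12$; the weak form in the statement is all that is used later.)
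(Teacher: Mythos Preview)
Your proof is correct and follows essentially the same approach as the paper: substitute $|\mcC_j| = 2^{10j+2}+2$ and bound the geometric sum by (a small constant times) its top term, then compare with $\tfrac12|\mcC_i|$. The paper is simply terser, collapsing your intermediate steps into the single estimate $\sum_{j=1}^{i-1}(2^{10j+2}+2)\le 2^{10(i-1)+4}$ before comparing with $\tfrac12(4\cdot 2^{10i}+2)$.
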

\begin{proof}
    We have that
    \begin{align*}
        & \sum_{j=1}^{i-1}|\mcC_{j}| = \sum_{j=1}^{i-1}(4w_j+2) = \sum_{j=1}^{i-1}(2^{10j+2}+2) \\
        &\le 2^{10(i-1)+4} 
        < \frac{1}{2} (4 \cdot 2^{10i}+2) = \frac{1}{2} (4 w_i+2) = \frac{1}{2}|\mcC_i|.
    \end{align*}
\end{proof}

We can show a lower bound for the teaching set constructed by the greedy algorithm with greediness parameter $k=1$, when it is instantiated for our constructed concept class $\mcF_N=\bigcup_{i=1}^N\mcC_i$.

\begin{lemma}
    The teaching set returned by $\textsc{Greedy}(\mcF_N, 1)$ has size at least $N$.
\end{lemma}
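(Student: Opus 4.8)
The plan is an induction on $N$: I will show that the \texttt{while} loop of \textsc{Greedy}$(\mcF_N,1)$ executes at least $N$ times, and that each execution appends a genuinely new domain element to $S$. The latter holds because if $T^\star=\{x\}$ for an already-queried $x$, then neither label on $x$ strictly shrinks the current $\mcC$, whereas as long as $|\mcC|>1$ some label on some other point always does; hence greedy never re-queries, and $|S|$ equals the number of iterations, which is $\ge N$. The base case $N=1$ is immediate since $|\mcF_1|=|\mcC_1|=4w_1+2\ge 2$ forces at least one iteration.

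For the inductive step ($N\ge 2$), the crux is to show that in the first execution of \cref{line:greedy-choice} the \emph{unique} minimizer is $(T^\star,b^\star)=(\{z_N\},0)$, and that $\mcF_N|_{z_N,0}$ is a copy of $\mcF_{N-1}$. I will lower bound $\bigl|\mcF_N|_{\{x\},b}\bigr|$ for every $x\in\mcX$, $b\in\{0,1\}$ using the structural facts recorded after the construction — every rectangle of $\mcC_i$ contains $z_i$; every point of $V_1\cup\dots\cup V_{i-1}$ lies in exactly half the rectangles of $\mcC_i$; every non-center point of $V_i$ lies in at least $2w_i+2$ rectangles of $\mcC_i$ — together with the fact (clear from the construction and \Cref{fig:rectangles-lb}) that a rectangle of $\mcC_j$ contains no point of $V_{j+1}\cup\dots\cup V_N$. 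The case split is: (a) if $x$ lies at a level $i<N$, then for either label exactly half of $\mcC_N$ survives, so the count is $\ge\frac12|\mcC_N|$; (b) if $x=z_N$ and $b=1$, all of $\mcC_N$ survives, so the count is $|\mcC_N|$; (c) if $x$ is a non-center point of $V_N$ and $b=1$, the count is at least $2w_N+2$; (d) if $x$ is a non-center point of $V_N$ and $b=0$, then all of $\bigcup_{j<N}\mcC_j$ plus at least one rectangle of $\mcC_N$ survive, so the count is at least $1+\sum_{j<N}|\mcC_j|$; and (e) if $x=z_N$ and $b=0$, exactly $\bigcup_{j<N}\mcC_j$ survives, giving $\sum_{j<N}|\mcC_j|$. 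By \cref{claim:level-domination}, $\sum_{j<N}|\mcC_j|<\frac12|\mcC_N|\le 2w_N+2\le|\mcC_N|$, so case (e) is strictly smaller than every other case, and since $N\ge 2$ its value is positive, hence a legal choice. Thus greedy is forced to pick $(\{z_N\},0)$.

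After this first pick, the surviving concepts are precisely the rectangles of $\mcC_1,\dots,\mcC_{N-1}$ — no level-$N$ rectangle avoids $z_N$, while every lower-level rectangle does. Restricted to the sub-domain $V_1\cup\dots\cup V_{N-1}$ these are exactly the concepts of $\mcF_{N-1}$, and since every survivor labels all of $V_N$ by $0$, any query inside $V_N$ is non-reducing and is never chosen; so the remainder of the run of \textsc{Greedy}$(\mcF_N,1)$ mirrors a run of \textsc{Greedy}$(\mcF_{N-1},1)$, which by the inductive hypothesis performs at least $N-1$ further iterations. Altogether \textsc{Greedy}$(\mcF_N,1)$ performs at least $N$ iterations, each appending a fresh point, so the returned teaching set has size at least $N$.

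The main obstacle is the first-step case analysis of paragraph two: one must enumerate all domain points (centers and non-centers at every level) and both labels, correctly count which rectangles of each $\mcC_j$ are consistent with the queried label — in particular, in case (d), tracking the handful of level-$N$ rectangles surviving alongside all of $\bigcup_{j<N}\mcC_j$ — and then invoke \cref{claim:level-domination} to certify that $(\{z_N\},0)$ beats every competitor by a strict margin. Everything else is routine bookkeeping and the induction.
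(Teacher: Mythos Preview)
Your proposal is correct and follows essentially the same approach as the paper: both establish, via the same case analysis and the same appeal to \cref{claim:level-domination}, that the greedy choice at the top level is forced to be $(\{z_N\},0)$, after which the class collapses to $\bigcup_{j<N}\mcC_j$. The only cosmetic difference is that the paper inducts on the iteration index $i$ (maintaining the invariant that after $i$ iterations $\mcC=\bigcup_{j\le N-i}\mcC_j$ and $S=\{z_{N-i+1},\dots,z_N\}$), whereas you induct on $N$ by identifying the post-first-step class with $\mcF_{N-1}$; the substantive content of the two arguments is identical.
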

\begin{proof}
    For $i=0,1,2,\dots,N-1$, we claim that at the beginning of the $i^\text{th}$ iteration of the while loop of \Cref{algo:greedy} (where $i=0$ refers to the first iteration), 
    \begin{align}
        \label{eqn:inductive-hypothesis}
        \mcC=\bigcup_{j=1}^{N-i}\mcC_i \quad \text{and } S=\{z_{N-i+1},z_{N-i+2},\dots,z_{N}\}.
    \end{align}
    We argue this inductively. When $i=0$, we are just entering the while loop for the very first time, and so $\mcC=\mcF_N=\bigcup_{j=1}^{N}\mcC_i$, and also $S=\emptyset$. Now, suppose that the claim holds for some $i \ge 0$: we will show that it continues to hold for $i+1$.
    In particular, we will argue that in the $i^\text{th}$ iteration of the while loop, $T^\star$ is chosen to be $\{z_{N-i}\}$ and $b^\star$ to be $0$ in \Cref{line:greedy-choice}. This will prove the claim, since (i) $T^\star$ gets appended to $S$ in \Cref{line:append-to-S}, (ii) all the concepts in $\mcC_{N-i}$ get removed from $\mcC$ upon restricting to $T^\star, b^\star$ in \Cref{line:restrict-C}, since every concept in $\mcC_{N-i}$ labels $z_{N-i}$ (which is the center of $V_{N-i}$) as $1$, and (iii) no concepts in $\mcC_1,\dots,\mcC_{N-i-1}$ are removed, since all such concepts label $\{z_{N-i}\}$ as $0$.

    For any $x$, let $\mcC(x,0)$ and $\mcC(x,1)$ denote the concepts in $\mcC$ that label $x$ as 0 and 1 respectively; here, $\mcC=\bigcup_{j=1}^{N-i}\mcC_i$ is the effective concept class at the beginning of the $i^\text{th}$ iteration of the while loop. Observe that for any $x \notin V_1 \cup \dots \cup V_{N-i}$, $\mcC(x,1)=\emptyset$ (such an $x$ is strictly to the right of the remaining rectangles in $\mcC$).  Thus, $T=\{x\}, b=1$ is not under consideration for the greedy choice. Furthermore, observe also that $\mcC(x,0)=\mcC$. Since $\mcC$ has at least two concepts (by virtue of entering the loop), these two concepts must then differ on some $y \in V_1 \cup \dots \cup V_{N-i}$, which means that the $\argmin$ can also not be attained at $T=x,b=0$ (since, e.g., restricting to $T=\{y\}, b=1$ reduces the size of the class by at least 1).

    We can thus restrict our attention to $x \in V_1 \cup \dots \cup V_{N-i}$. Note that $|\mcC(z_{N-i}, 0)| > 0$ (the rectangles to the left of $z_{N-i}$ do not contain it). Note also that $|\mcC(z_{N-i}, 0)| < |\mcC(z_{N-i}, 1)|$ because all the concepts in $\mcC_{N-i}$ label $z_{N-i}$ as 1. Even though all the remaining concepts label $z_{N-i}$ as 0, we still know from \Cref{claim:level-domination} that these are strictly less than half the number of concepts in $\mcC_{N-i}$.

    Hence, our task is to show that for any $x \neq z_{N-i}$, $|\mcC(z_{N-i},0)| < |\mcC(x, 0)|$ and $|\mcC(z_{N-i},0)| < |\mcC(x, 1)|$. If this is the case, then $T=\{z_{N-i}\}, b=0$ %
    will indeed realize the greedy choice in \Cref{line:greedy-choice}. Note that this is equivalent to showing that $|\mcC(z_{N-i},1)| > |\mcC(x, 1)|$ and $|\mcC(z_{N-i},1)| > |\mcC(x, 0)|$.
    
    \medskip\noindent\textbf{Case 1:} $x \in V_1 \cup V_2 \cup \dots \cup V_{N-i-1}$, that is $x$ is in a lower level than $z_{N-i}$. \\
    \noindent Recall that exactly half of the concepts in $\mcC_{N-i}$ label such an $x$ as 1. Then, even if all the remaining concepts were to label $x$ as 1, we still have that
    \begin{align*}
        |\mcC(x,1)| &\le \frac{1}{2}|\mcC_{N-i}| + \sum_{j=1}^{N-i-1}|\mcC_{j}| < \frac{1}{2}|\mcC_{N-i}| + \frac{1}{2}|\mcC_{N-i}| = |\mcC_{N-i}| = |\mcC(z_{N-i}, 1)|,
    \end{align*}
    where in the second inequality, we used \Cref{claim:level-domination}, and in the last equality, we used that all concepts in $\mcC_{N-i}$ label $z_{N-i}$ as 1. By exactly the same calculation, we also get that $|\mcC(x,0)|<|\mcC(z_{n-i}, 1)|$.

    \medskip\noindent\textbf{Case 2:} $x \in V_{N-i}$, that is, $x$ is in the same level as $z_{N-i}$ but is not the center. \\
    We immediately have that $|\mcC(z_{N-i},1)| > |\mcC(x, 1)|$, because only the concepts in $\mcC_{N-i}$ label these points as 1, and the concepts in $\mcC_{N-i}$ that label $x$ as 1 are a strict subset of the concepts that label $z_{N-i}$ as 1 (which is all of them). Furthermore, it is also the case that $|\mcC(z_{N-i},1)| > |\mcC(x, 0)|$. To see this, observe how $x$ is 0 in strictly less than half of $C_{N-1}$, so that
    \begin{align*}
        |\mcC(x,0)| &< \frac{1}{2}|\mcC_{N-i}| + \sum_{j=1}^{N-i-1}|\mcC_{j}| < \frac{1}{2}|\mcC_{N-i}| + \frac{1}{2}|\mcC_{N-i}| = |\mcC_{N-i}| = |\mcC(z_{N-i}, 1)|. %
    \end{align*}

    This completes our inductive proof for \eqref{eqn:inductive-hypothesis}. In particular, for $i=N-1$, we have shown that $S=\{z_2,z_3,\dots,z_{N}\}$, and $\mcC = \mcC_1$. Again, $\mcC_1$ has at least two concepts that differ on some point in $V_1$, and so, the $(N-1)^{\text{th}}$ iteration of the while loop will find such a point in $V_1$ to add to $S$. Thus, the final teaching set that is returned has size at least $N$.
\end{proof}

We now reflect on the structural properties of the construction that the proof effectively relied on. The number of concepts in level $i$ heavily dominates (\Cref{claim:level-domination}) the total number of concepts in lower levels. Every concept in level $i$ labels the center $z_i$ as 1, which biases it. Since the concepts in lower levels form such a minority, they don't sway the bias of the center by much. On the other hand, for any point in a lower level, the concepts in level $i$ maintain exactly a 50-50 balance. The rest of the concepts might sway this bias by a bit, but again, these concepts are too few when compared to the concepts in level $i$, so such points stay nearly unbiased.

\section{Lower bound for $k\ge 2$}
\label{sec:largerk}
   We will now show that for any $k \ge 2$, there is a concept class with VC dimension $d \le 4k+1$ for which the $\log(\log(|\mcC|))$ dependence is unavoidable. This dependence on the concept class size appeared in \citep{Moran2015}, where the upper bound 
\[
\tdmin = O\Bigl(d\,2^d\log\bigl(\log(|\mcC|)\bigr)\Bigr)
\]
was proven for $k = 2$. In particular, our next result shows that there exists a family of concept classes $\{\mcF_N\}$, each having VC dimension at most $9$, such that \textsc{Greedy}$(\mcC, 2)$ outputs a teaching set of size $\Omega\bigl(\log\bigl(\log(|\mcF_N|)\bigr)\bigr)$.
        
    Our concept class will take inspiration from our $k=1$ construction, although we require a more complicated construction for this setting.

    \begin{figure}[t]
    \centering
        \includegraphics[scale=0.85]{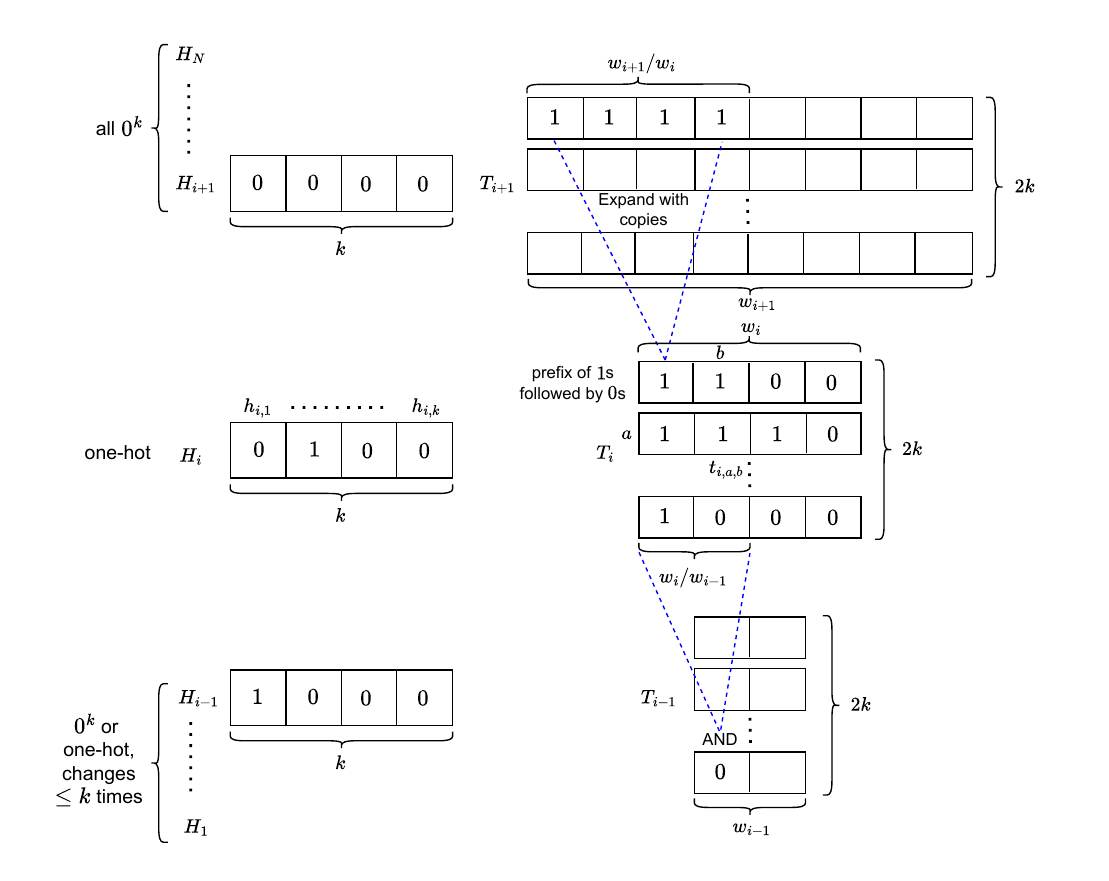}
        \caption{Example of a concept in $\mcC_i$ for \cref{theorem:general-k-lower-bound}. Given the prefixes for $T_i$, the values are deterministically expanded/contracted for other $T_j$. At most $k$ values of $j \in \{1,\dots,i-1\}$ may have $H_j$ be different from $H_{j+1}$; in this concept a change occurred from $H_{i-1}$ to $H_{i}$.}
        \label{fig:general-lb}
\end{figure}

    \begin{theorem}[Lower Bound for $k \ge 2$]
    \label{theorem:general-k-lower-bound}
    For every positive integer $k \ge 2$, there exists a family $\{\mcF_N\}$ of concept classes (here $N = 1, 2, \ldots$) such that
    \begin{enumerate}
        \item $\mcF_N$ has VC dimension at most $4k+1$, 
        \item $\mcF_N$ has size at most {$2^{4k\log(8k) \cdot 2^{2N}}$} and is defined on a domain $\mcX$ of size at most {$6k \cdot 2^{\log(8k) \cdot 2^{2N}}$},
        \item \textsc{Greedy}$(\mcF_N, k)$ returns a teaching set of size at least {$kN = \Omega\left( \log(\log(|\mcF_N|))\right)$}.
    \end{enumerate}
\end{theorem}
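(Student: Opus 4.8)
The plan is to run the same ``levelled'' strategy as in the proof of \Cref{theorem:rectangle-lower-bound}, but with a richer per‑level gadget that is provably rigid against restrictions touching fewer than $\sim k$ coordinates. First I would construct $\mcF_N=\bigcup_{i=1}^N\mcC_i$ as a union of $N$ levels, where level $i$ lives on a point block of size roughly $(8k)^{4^i}$ — so the blocks now grow \emph{doubly} exponentially in $i$, the total domain has size at most $6k\cdot 2^{\log(8k)\cdot 2^{2N}}$, and $\log|\mcF_N|=\Theta\!\bigl(k\log(8k)\cdot 2^{2N}\bigr)$, whence $kN=\Theta(\log\log|\mcF_N|)$ as claimed. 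As indicated in \Cref{fig:general-lb}, a concept of $\mcC_i$ is specified by a ``local'' prefix‑type choice on the top block $T_i$ (generalizing the up/down prefixes of the $k=1$ construction to $\Theta(k)$ parallel strands) together with a ``height profile'' $(H_1,\dots,H_{i-1})$ over the lower blocks that is constrained to switch value at most $k$ times and that, once fixed, deterministically expands or contracts the concept on each lower block $T_j$. I would then verify the three stated bounds: the counts on $|\mcX|$ and $|\mcF_N|$ are the usual geometric‑series estimates (the top level dominates), and $\mathrm{VC}(\mcF_N)\le 4k+1$ follows by realizing $\mcF_N$ with axis‑aligned boxes in $\mathbb{R}^{2k}$ (roughly $k$ independent rectangle gadgets, plus one extra degree of freedom from the level/switch structure), so that no $4k+2$ points are shattered.

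The combinatorial core is a pair of lemmas generalizing \Cref{claim:level-domination}. (i) A \emph{strengthened domination bound}: since the exponents of the block sizes grow like $4^i$, we obtain $\bigl(\sum_{j<i}|\mcC_j|\bigr)^2<|\mcC_i|$ for all $i\ge 2$ — this is precisely why the single‑exponential growth that sufficed for $k=1$ is no longer enough. (ii) A \emph{rigidity lemma} for the current top level $\mcC_m$: there is a canonical ``clearing move'' — a block of exactly $k$ points inside $T_m$ with the all‑zero pattern — that eliminates every concept of $\mcC_m$ while leaving $\bigcup_{j<m}\mcC_j$ entirely intact (lower‑level concepts vanish on $T_m$, so the all‑zero pattern there keeps all of them); moreover (a) no set of fewer than $k$ points, with any pattern, can eliminate all of $\mcC_m$, and (b) every nonempty restriction of $\mcF_N$ by a set of at most $k$ points that retains even one concept of $\mcC_m$ has size at least $\sqrt{|\mcC_m|}$, which by (i) strictly exceeds $\sum_{j<m}|\mcC_j|$. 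Property (ii) is where the ``at most $k$ switches'' design and the $\Theta(k)$ parallel strands are essential: with only $k$ queried coordinates one can pin down at most $k$ switch locations / strand prefixes, and this provably cannot cut $\mcC_m$ below its square root, nor realize the clearing transition, unless all $k$ coordinates are spent.

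Granting (i) and (ii), the proof finishes as for $k=1$: by induction on the phase, whenever the current class equals $\bigcup_{j\le m}\mcC_j$, the $\argmin$ in \Cref{line:greedy-choice} is \emph{forced} to be a clearing move of level $m$ — any competing choice is empty, or of size $\ge\sqrt{|\mcC_m|}>\sum_{j<m}|\mcC_j|$ because it retains some concept of $\mcC_m$, or equal to $\sum_{j<m}|\mcC_j|$ only by devoting at least $k$ points to clearing $\mcC_m$ (so it can retain no lower‑level restriction and must have size exactly $\sum_{j<m}|\mcC_j|$) — so at least $k$ fresh points of $T_m$ are added to $S$ and the class drops to $\bigcup_{j<m}\mcC_j$ with the lower levels structurally unchanged. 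Iterating over $m=N,N-1,\dots,1$ (the last level handled exactly as in \Cref{theorem:rectangle-lower-bound}) forces at least $k$ points per level, so $|S|\ge kN$; substituting $kN=\Omega(\log\log|\mcF_N|)$ from the first step completes the proof.

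\textbf{Main obstacle.} The delicate part is the rigidity lemma (ii): certifying simultaneously that \emph{clearing a level genuinely costs $k$ points} (its ``kill number'' is exactly $k$, enforced by the $\Theta(k)$‑strand layout) and that \emph{every nonempty $\le k$‑point restriction sparing even one top‑level concept is still at least $\sqrt{|\mcC_m|}$} (rigidity, enforced by the at‑most‑$k$‑switches mechanism). This demands a careful case analysis over how the queried set distributes among the strands of the top block and the lower blocks and over which patterns keep the restriction nonempty; getting the gadget to satisfy both properties at once is the main work, whereas the size bookkeeping, the strengthened domination bound, and the induction are routine adaptations of the $k=1$ argument.
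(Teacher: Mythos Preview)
Your high-level plan — $N$ doubly-exponentially growing levels, a canonical $k$-point clearing move per level, a strengthened domination inequality, and an induction forcing greedy to peel off one level per round — matches the paper, and your $\sqrt{|\mcC_m|}$ rigidity target is essentially the paper's $(w_m+1)^k$. But there is a real gap in your case analysis, and the paper's construction differs from what you describe in a way that closes it.

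The gap is your case (iii): a competing $\le k$-point restriction that clears $\mcC_m$. You assert this ``must have size exactly $\sum_{j<m}|\mcC_j|$'' because all $k$ points are spent on clearing and so none remain to restrict lower levels. That does not follow — the same $k$ points that kill $\mcC_m$ could lie where lower-level concepts are nonconstant, simultaneously cutting into $\bigcup_{j<m}\mcC_j$ and beating the canonical move. Your conditions (a)+(b) (kill number $=k$; rigidity $\ge\sqrt{|\mcC_m|}$ when a top concept survives) do not rule this out. The paper handles this by proving the clearing restriction is \emph{unique}, not merely of cost $k$: the domain at each level is split into $k$ dedicated head points $H_m$ (on which every concept of $\mcC_m$ is a one-hot vector and every lower-level concept is identically $0$) and $2k$ rows of tail points, and the pair of head/tail claims shows that \emph{every} realizable $\le k$-point restriction other than exactly $(H_m,0^k)$ still retains at least $(w_m+1)^k>\sum_{j<m}|\mcC_j|$ concepts of $\mcC_m$. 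Hence no alternative clearing restriction exists, and the problematic sub-case never arises.

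Relatedly, your assertion that ``lower-level concepts vanish on $T_m$'' and your VC bound via an embedding into axis-aligned boxes in $\mathbb{R}^{2k}$ both diverge from the paper. In the paper's construction lower-level concepts do \emph{not} vanish on higher-level tail points — they are determined there by the expand rule — and this cross-level coupling is exactly what the paper exploits to bound the number of tail points in any shattered set by $2k$ (together with a separate bound of $2k+1$ on head points, via the one-hot and ``$\le k$ changes'' constraints). The VC argument is purely combinatorial; it is not clear that the ``at most $k$ switches'' constraint can be realized as a box, so your embedding claim would need independent justification.
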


    \paragraph{Construction of the class.} We begin by designing the concept class $\mcF_N$ for every $N \ge 1$.
    
    \textit{Domain. } Our domain $\mcX$ {is an abstract finite set that} consists of the union of $N$ sets $V_1 \cup  \dots \cup V_N$. Each $V_i$ is the union of two sets of domain points $H_i$ and $T_i$. The set $H_i$ has $k$ \textit{head points} $h_{i,1},\dots,h_{i,k}$. The set $T_i$ has $2k$ rows $T_{i,1},\dots,T_{i,2k}$ of $w_i$ \textit{tail points} each, where  $t_{i,a,b}$ denotes the tail point in the $a^\text{th}$ row and $b^\text{th}$ column of $T_i$ (see~\cref{fig:general-lb}).
    We set $w_i \triangleq 2^{\log(8k) \cdot 2^{2i}}$. Note that $w_{i+1} = w_i^4$; in particular, each $w_i$ divides $w_{i+1}$, a property we will utilize. {Roughly, the head points $H_i$ will play a similar role to the center point $z_i$ in \cref{theorem:rectangle-lower-bound}, and the tail points $T_i$ will play a similar role to $V_i \setminus z_i$ in \cref{theorem:rectangle-lower-bound}.}
    
    \textit{Concept class. } The concept class $\mcF_N$ will be a union of $\mcC_1,\dots,\mcC_N$. Let us focus on the concepts in $\mcC_i$. We will specify these as a product of a concept class $\mcA_i$ on the combined set of head points $H_1,\dots,H_N$, and concept class $\mcB_i$ on the combined set of tail points $T_1,\dots,T_N$. That is, $\mcC_i \triangleq \mcA_i \otimes \mcB_i$, {yielding a concept for every pair of concepts in $\mcA_i$ and $\mcB_i$.}

    The concept class $\mcA_i$ consists of all concepts $c_h$, where: (i) $c_h$ labels every point in $H_{i+1},\dots,H_{N}$ as 0, (ii) $c_h$ labels $H_i$ as a one-hot vector (that is, $c_h(h_{i,j})=1$ for exactly one $j \in [k]$), (iii) for any $j < i$, $c_h$ labels $H_j$ either as the zero vector $0^k$, or as a one-hot vector, and (iv) there are at most $k$ ``changes'' in the labeling of $c_h$ on $H_1,\dots,H_{i-1}$; concretely, there are at most $k$ indices in $\{1,2,\dots,i-1\}$ where the labeling of $c_h$ on $H_j$ (as a $k$-dimensional vector) differs from the labeling on $H_{j+1}$. {This is illustrated in the left half of \cref{fig:general-lb}. Property (iv)} is enforced to control the VC dimension of $\mcA_i$ to be $O(k)$. %

    We now describe the concept class $\mcB_i$.  {We refer to the right half of \cref{fig:general-lb} for better understanding}. A concept $c_t$ in $\mcB_i$ labels each row of $T_i$ with a (possibly empty) prefix of 1s, followed by a (possibly empty) suffix of 0s---in total, there are $(w_i+1)^{2k}$ possible choices of these prefix sizes over the $2k$ rows of $T_i$. We will have a concept $c_t$ in $\mcB_i$ for each such choice; moreover, the labels of $c_t$ on $T_i$ will determine its labels on $T_1,\dots,T_{i-1}$ as well as $T_{i+1},\dots,T_N$. Thus, the total size of $\mcB_i$ will be $(w_i+1)^{2k}$. To describe the labels that a concept $c$ realizes on these other tail sets, suppose that it labels the $2k$ rows of $T_i$ as $(c_t(t_{i,1,1}),\dots,c_t(t_{i,1,w_i})),\ldots,(c_t(t_{i,2k,1}),\dots,c_t(t_{i,2k,w_i}))$. 
    
    For $j>i$, starting with $j=i+1$, the labeling of $c_t$ on $T_{j,a}$ will be such that each $c_t(t_{j,a,b})$ is one of $\frac{w_j}{w_{j-1}}$ copies of the label that $c_t$ assigns to a corresponding point in $T_{j-1,a}$. More concretely, for $j > i$,
    \begin{align*}
        c_t(t_{j, a, b}) = c_t\left(t_{j-1, a, \left\lceil \frac{b}{w_j/w_{j-1}} \right\rceil}\right).
    \end{align*}
    Similarly, for $j < i$, starting with $j=i-1$, each label that $c_t$ assigns to a point in $T_{j,a}$ will be the logic $\AND$ of the labels it assigns to a batch of $\frac{w_{j+1}}{w_j}$ points in $T_{j+1,a}$. Concretely, for $j < i$,
    \begin{align*}
        c_t(t_{j, a, b})  = \AND\left(\left\{c_t\left(t_{j+1, a, \frac{w_{j+1}}{w_j}(b-1) +1}\right),\ldots,c_t\left(t_{j+1, a, \frac{w_{j+1}}{w_j}(b-1) +\frac{w_{j+1}}{w_j}}\right)\right\}\right).
    \end{align*}
    This construction is better understood visually via \cref{fig:general-lb}. We can see how a label of $c_t$ in a particular row in $T_i$ is copied over (``expanded'') to $w_{i+1}/w_i$ many slots in the corresponding row in $T_{i+1}$. Similarly, we can also see how the label of $c_t$ in a row in $T_{i-1}$ corresponds to the $\AND$ (``contraction'') of $w_i/w_{i-1}$ labels in the corresponding row in $T_i$. We note how these expansion/contraction operations maintain the property that every row of tail points is labeled with a prefix of 1s followed by a suffix of 0s.

    We recall that $\mcC_i = \mcA_i \otimes \mcB_i$, and $\mcF_N = \bigcup_{i=1}^N\mcC_i$. We note also that the concept classes $\mcC_i$ are disjoint, as is witnessed by how for $i<j$, any concept in $\mcC_i$ labels $H_j$ as $0^k$, while every concept in $\mcC_j$ labels $H_j$ as some one-hot vector.

    \paragraph{Size of teaching set. } We now analyze the size of the teaching set that \textsc{Greedy}$(\mcF_N, k)$ returns.

    \textit{Intuition. } Our strategy will be similar to the proof of \Cref{theorem:rectangle-lower-bound}. We will aim to maintain that at the beginning of the $i^\text{th}$ iteration of the while loop in \textsc{Greedy}$(\mcF_N, k)$, the remaining concepts are exactly $\mcC_1 \cup \dots \cup \mcC_{N-i}$. In particular, we will inductively prove that at iteration $i$, the algorithm picks $T^\star$ to be the $k$ head points in $H_{N-i}$, and sets $b^\star=0^k$. This removes all concepts in $\mcC_{N-i}$, yet none of the concepts in $\mcC_1, \dots, \mcC_{N-i-1}$. Proceeding thus for $N$ iterations would then force the returned teaching set to be of size at least $kN$ as desired. Our main intuition is as follows: for any choice of restriction other than $H_{N-i}, 0^k$ in \Cref{line:greedy-choice}, there are at least $k$ rows of $T_{N-i}$ that are completely unrestricted, and hence these contribute at least $(w_{N-i}+1)^k$ concepts consistent with the restriction. However, our choice of $w_{N-i}$ ensures that $(w_{N-i}+1)^k$ is strictly greater than the number of concepts in $\mcC_1 \cup \dots \cup \mcC_{N-i-1}$; but these are precisely the concepts that remain if our restriction forces all of $H_i$ to $0$. We now make this formal.

    For $i=0,1,2,\dots,N-1$, we claim that at the beginning of the $i^\text{th}$ iteration of the while loop in \Cref{line:greedy-loop} (where $i=0$ refers to the first iteration), 
    \begin{align}
        \label{eqn:inductive-hypothesis-general}
        \mcC=\bigcup_{j=1}^{N-i}\mcC_i \quad \text{and } S=H_{N-i+1} \cup H_{N-i+2} \cup \dots \cup H_{N}.
    \end{align}
    When $i=0$, we are just entering the while loop for the very first time, and so $\mcC=\mcF_N=\bigcup_{j=1}^{N}\mcC_i$, and also $S=\emptyset$. Now, suppose that the claim holds for some $i \ge 0$: we will show that it continues to hold for $i+1$.
    In particular, we will argue that in the $i^\text{th}$ iteration of the while loop, $T^\star$ is chosen to be $H_{N-i}$ and $b^\star$ to be $0^k$ in \Cref{line:greedy-choice}. This will prove the claim, since (i) $T^\star$ gets appended to $S$ in \Cref{line:append-to-S}, (ii) all the concepts in $\mcC_{N-i}$ get removed from $\mcC$ upon restricting to $T^\star, b^\star$ in \Cref{line:restrict-C}, since every concept in $\mcC_{N-i}$ labels the head points $H_{N-i}$ as a one-hot vector, and (iii) no concepts in $\mcC_1,\dots,\mcC_{N-i-1}$ are removed, since all these concepts label $H_{N-i}$ as $0^k$.

    So, let $T \subseteq \mcX, 1 \le |T| \le k$ and $b \in \{0,1\}^{|T|}$ be any candidate choice for the $\argmin$ in \Cref{line:greedy-choice}. Note that this also requires that at least one concept in $\mcC_1 \cup \dots \cup \mcC_{N-i}$ labels $T$ as $b$. Let us decompose $T$ into head and tail points as follows:
    \begin{align}
        \label{eqn:decomposition-head-tail}
        T = \{\underbrace{x_1,\dots,x_{n_h}}_{\text{head points}}, \underbrace{y_1,\dots,y_{n_t}}_{\text{tail points}}\},
    \end{align}
    where $0 \le n_h,n_t \le |T|$ and $n_h + n_t = |T| \le k$. Similarly, let $b_h \in \{0,1\}^{n_h}$, $b_t \in \{0,1\}^{n_t}$ be the labeling in $b$ for the head and tail points respectively. Then, it must be the case that there is some $j \in \{1,2,\dots,N-i\}$, some concept $c_h \in \mcA_j$, and some concept $c_t \in \mcB_j$, such that $c_h$ labels $\{x_1,\dots,x_{n_h}\}$ as $b_h$ and $c_t$ labels $\{y_1,\dots,y_{n_t}\}$ as $b_t$.

    \begin{claim}
        \label{claim:tail-set}
        Consider any arbitrary labeling $b_t$ of $\{y_1,\dots,y_{n_t}\}$ (where $n_t \le k$) that is consistent with some $c_t \in \mcB_1 \cup \dots \cup \mcB_{N-i}$. Then, there are at least $(w_{N-i}+1)^k$ different concepts in $\mcB_{N-i}$ that are consistent with this labeling.
    \end{claim}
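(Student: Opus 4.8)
The plan is to exploit the product-over-rows structure of $\mcB_{N-i}$. Each tail point $y_\ell$ is a single point of the form $t_{j_\ell,a_\ell,\cdot}$, so it lies in exactly one of the $2k$ rows of $T$, namely the $a_\ell$-th row. Hence the labeling $b_t$ ``touches'' at most $n_t \le k$ of the $2k$ rows of $T_{N-i}$, leaving at least $2k - k = k$ rows untouched. Since a concept of $\mcB_{N-i}$ is specified by an independent choice of prefix length in $\{0,1,\dots,w_{N-i}\}$ on each of its $2k$ rows, it suffices to produce \emph{one} assignment of prefix lengths to the touched rows that extends to a concept of $\mcB_{N-i}$ consistent with $b_t$: the prefix lengths on the $\ge k$ untouched rows can then be chosen freely and independently, giving at least $(w_{N-i}+1)^k$ distinct concepts of $\mcB_{N-i}$, all consistent with $b_t$.

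To build such an assignment I would lift the given witness. By hypothesis there is some $j^\star \le N-i$ and some $c_t \in \mcB_{j^\star}$ agreeing with $b_t$; say $c_t$ labels the $a$-th row $T_{j^\star,a}$ with a prefix of length $p_a \in \{0,\dots,w_{j^\star}\}$. Let $c_t'$ be the concept of $\mcB_{N-i}$ whose prefix length on $T_{N-i,a}$ is $q_a = p_a \cdot w_{N-i}/w_{j^\star}$; this is a legitimate prefix length since $w_{j^\star}$ divides $w_{N-i}$ (so $q_a$ is an integer) and $0 \le p_a \le w_{j^\star}$ (so $0 \le q_a \le w_{N-i}$). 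The claim is that $c_t'$ agrees with $c_t$ on \emph{every} tail point, which I would verify row by row. For a fixed row $a$: at any level $j \ge N-i$ both concepts are obtained by the copy-expansion rule, and expanding $q_a$ from level $N-i$ yields prefix length $q_a w_j/w_{N-i} = p_a w_j/w_{j^\star}$, which is exactly the expansion of $p_a$ from level $j^\star$; at any level $j \le N-i$ the induced prefix length of either concept equals $\lfloor p_a\,w_j / w_{j^\star}\rfloor$, because each contraction step divides the current prefix length by the integer block size $w_{j'+1}/w_{j'}$ and floors, and such divide-and-floor steps compose via the identity $\lfloor \lfloor x/m_1\rfloor / m_2\rfloor = \lfloor x/(m_1 m_2)\rfloor$ for positive integers $m_1,m_2$ (and any expansion step, being an exact copy, introduces no rounding). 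In particular $c_t'$ agrees with $c_t$, and hence with $b_t$, on $y_1,\dots,y_{n_t}$.

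Combining the two pieces finishes the proof: fix the prefix lengths of $c_t'$ on the at most $k$ touched rows, let the remaining $\ge k$ rows range over all $w_{N-i}+1$ prefix lengths independently, and each resulting concept of $\mcB_{N-i}$ is consistent with $b_t$, for a total of at least $(w_{N-i}+1)^k$. I expect the only genuinely non-routine point to be the lift in the middle step --- re-realizing, by a concept ``rooted'' at the higher level $N-i$, a labeling that the witness produced through the lossy $\AND$-contractions down from level $j^\star$. It goes through precisely because all block sizes $w_{j+1}/w_j$ are integers, so the contraction maps are compositions of divide-and-floor operations that collapse to a single divide-and-floor; the counting and the bookkeeping of touched versus untouched rows are then immediate from the construction (and this is exactly why $T_i$ was given $2k$, rather than $k$, rows).
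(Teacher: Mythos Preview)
Your proof is correct and follows the same strategy as the paper: exploit the product-over-rows structure of $\mcB_{N-i}$, observe that at most $n_t\le k$ of the $2k$ row indices are touched, and conclude that the $\ge k$ untouched rows contribute a free factor of $(w_{N-i}+1)^k$. The only difference is that where the paper asserts in one line that each touched row admits at least one consistent concept of $\mcB_{N-i}$ ``due to realizability \ldots\ the necessary prefix structure,'' you make this step fully explicit by lifting the witness $c_t\in\mcB_{j^\star}$ to $c_t'\in\mcB_{N-i}$ via $q_a=p_a\cdot w_{N-i}/w_{j^\star}$ and checking (using the divide-and-floor identity) that $c_t'$ agrees with $c_t$ on every tail point; this is a welcome clarification rather than a different argument.
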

    \begin{proof}
        Each $y_i$ belongs to some set $T_j$ of tail points---let the row in $T_j$ that contains $y_i$ be denoted as $r_i$. Then consider the rows $r_1,r_2,\dots,r_{n_t}$. These are at most $n_t \le k$ distinct rows. In particular, there are at least $k$ rows from $\{1,2,\dots,2k\}$ that do not feature in $r_1,r_2,\dots,r_{n_t}$. 

        Now, observe how $\mcB_{N-i}$ actually has additional structure: the labels on different rows of $T_1,\dots,T_N$ do not interact with each other. Namely, if we denote by $\mcB_{N-i, r}$ the restrictions of the concepts in $\mcB_{N-i}$ to row $r$ in $T_1,\dots,T_N$, then $\mcB_{N-i} = \mcB_{N-i, 1} \otimes \mcB_{N-i, 2} \otimes \dots \otimes \mcB_{N-i, 2k}$. The labeling $b_t$ on $\{y_1,\dots,y_{n_t}\}$ possibly pins down $n_t \le k$ sets in this product, in the worst case, to a single concept (there will always be at least one concept, since, e.g., for all the $y_i$s that are in some common row $r_i$, there is at least one concept in $\mcB_{N-i, r_i}$ that is consistent with the labels on these $y_i$s, because these labels must have---due to realizability---the necessary prefix structure enforced on concepts in $\bigcup_{j=1}^{N-i}\mcB_j$); but even so, there are at least $k$ sets that are untouched. Each of these sets has size $w_{N-i}+1$ (we choose a prefix in the corresponding row in $T_{N-i}$, and expand/contract upward/downward). Thus, the total cardinality of the product (and hence the number of concepts in $\mcB_{N-i}$), even after restricting to $b_t$ on $\{y_1,\dots,y_{n_t}\}$, is at least $(w_{N-i}+1)^k$ as claimed.
    \end{proof}

    \begin{claim}
        \label{claim:head-set}
        Consider any arbitrary labeling $b_h$ of $\{x_1,\dots,x_{n_h}\}$ (where $n_h \le k$, and excluding the case where $\{x_1,\dots,x_{n_h}\}=H_{N-i}$ and $b_h=0^k$) that is consistent with some $c_h \in \mcA_1 \cup \dots \cup \mcA_{N-i}$. Then, there is at least one concept in $\mcA_{N-i}$ that is consistent with this labeling.
    \end{claim}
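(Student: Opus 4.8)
The plan is to exhibit an explicit concept $c_h^\star \in \mcA_{N-i}$ that agrees with $b_h$ on $\{x_1,\dots,x_{n_h}\}$. First I would record the structure of $b_h$: each $x_\ell$ lies in some head set $H_m$ (call $H_m$ \emph{touched} if it contains some $x_\ell$), and since $n_h \le k$ at most $k$ head sets are touched. Because $b_h$ is realized by some concept of $\mcA_1 \cup \dots \cup \mcA_{N-i}$, and every such concept labels each $H_m$ either as $0^k$ or as a one-hot vector, the labels $b_h$ prescribes inside any single touched $H_m$ contain at most one $1$; moreover, inside a touched $H_m$ with $m > N-i$ they are all $0$ (a concept of $\mcA_j$ with $j \le N-i$ is zero on $H_{N-i+1},\dots,H_N$). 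These two observations are all I would extract from the existence of the consistent $c_h$.

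Next I would peel off one degenerate case: if some touched head set $H_m$ has all $k$ of its points among $x_1,\dots,x_{n_h}$, all labeled $0$ by $b_h$, then (since $n_h \le k$) in fact $\{x_1,\dots,x_{n_h}\} = H_m$ and $b_h = 0^k$. The hypothesis of the claim rules out $m = N-i$, and for $m \ne N-i$ I would just take the concept of $\mcA_{N-i}$ labeling $H_1,\dots,H_{N-i-1}$ by $0^k$ and $H_{N-i}$ by an arbitrary one-hot vector: it lies in $\mcA_{N-i}$ (it has only one ``change'', between $H_{N-i-1}$ and $H_{N-i}$) and it labels $H_m$ by $0^k$, so it is consistent with $b_h$. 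Hence I may assume every touched head set has at least one coordinate that $b_h$ does not force to $0$.

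In the main case I would define $c_h^\star$ to be $0^k$ on $H_{N-i+1},\dots,H_N$ and one-hot on each $H_m$ with $1 \le m \le N-i$, with the $1$ in coordinate $p_m \in [k]$ chosen as follows: on a touched $H_m$, let $p_m$ be the coordinate $b_h$ forces to $1$ if there is one, and otherwise any coordinate $b_h$ does not force to $0$ (available by the previous step); on an untouched $H_m$ with $m \le N-i$, copy $p_m$ from the nearest touched head set of index $\le m$ (or from the first touched head set if $m$ precedes all of them, or set everything to $1$ if no head set of index $\le N-i$ is touched --- the case $n_h = 0$ being trivial anyway). Consistency of $c_h^\star$ with $b_h$ is then immediate coordinate by coordinate on touched $H_m$ with $m \le N-i$, and on touched $H_m$ with $m > N-i$ both $c_h^\star$ and $b_h$ assign $0$. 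Finally, to place $c_h^\star$ in $\mcA_{N-i}$ the only nonobvious requirement is the bound of $k$ ``changes'' among the indices $\{1,\dots,N-i-1\}$: since untouched head sets inherit their label from the left, the sequence $p_1,\dots,p_{N-i}$ is a step function whose distinct blocks correspond to touched head sets of index $\le N-i$, of which there are at most $n_h \le k$, so it has at most $n_h-1 \le k-1$ changes.

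I expect the main obstacle to be controlling this change count, which is exactly why the construction keeps $c_h^\star$ one-hot on the entire prefix $H_1,\dots,H_{N-i}$ rather than dropping to $0^k$ between the forced one-hot ``spikes'': every return from $0^k$ would cost two changes and could overshoot the budget $k$ when $b_h$ forces many $1$'s, whereas a one-hot prefix pays at most one change per touched head set. The only obstruction to a one-hot prefix --- a touched head set all of whose coordinates $b_h$ forces to $0$ --- is precisely the degenerate case dispatched in the second step, and the claim's explicit exclusion of ``$\{x_1,\dots,x_{n_h}\}=H_{N-i}$, $b_h=0^k$'' is exactly what prevents that case from being fatal when it lands on $H_{N-i}$.
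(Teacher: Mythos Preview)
Your proposal is correct and follows essentially the same approach as the paper's proof: both build a concept in $\mcA_{N-i}$ that is $0^k$ above level $N-i$ and one-hot on every $H_m$ with $m\le N-i$, choosing the hot coordinate on each touched $H_m$ to match $b_h$ and propagating it to the adjacent untouched heads so that the number of changes is at most the number of touched heads (hence $\le k$). The only difference is cosmetic: the paper splits into ``all $k$ points lie in a single $H_j$'' versus ``fewer than $k$ points in every $H_j$'', whereas you peel off only the sub-case ``all $k$ points in one $H_m$, all labeled $0$'' and absorb the one-hot sub-case into your main construction---this is arguably a tad cleaner but not materially different.
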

    \begin{proof}
         Let $S=\{x_1,\dots,x_{n_h}\}$; note that any $x_i \in S$ that belongs to $H_{N-i+1} \cup \dots H_N$ must be labeled as 0 in $b_h$. This is because $S$ is realizable by $\mcA_1 \cup \dots \cup \mcA_{N-i}$, and every concept in this union labels all head points above $H_{N-i}$ as 0. So, from the point of view of proving the existence of a concept in $\mcA_{N-i}$ consistent with $b_h$ of $S$, we can assume without loss of generality that $S \subseteq H_{1} \cup \dots H_{N-i}$. We will then construct a labeling on $H_1,\dots,H_N$ consistent with $b_h$ on $S$, such that this labeling corresponds to a concept in $\mcA_{N-i}$. The labeling on each of $H_{N-i+1},\dots,H_N$ is simply the zero vector. We specify the labeling on $H_{N-i},\dots,H_1$ according to cases:

        \medskip\paragraph{Case 1: $S$ consists of $k$ points from some $H_j$, for $1 \le j \le N-i$. } Note then that $b_h$ can either be the zero vector, or a one-hot vector---this is because $b_h$ is a labeling of $S$ that is realizable by $\mcA_1 \cup \dots \cup \mcA_{N-i}$, and every concept in this union labels a head set either as the zero vector, or a one-hot vector. If $b_h$ is a one-hot vector, then simply assign each of $H_{N-i},\dots,H_1$ to this one-hot vector. Otherwise, if $b_h$ is the zero vector, then by the assumption in the claim, it must be the case that $j < N-i$. We then label $H_{N-i}$ by some arbitrary one-hot vector, and each of $H_{N-i-1},\dots,H_1$ as the zero vector. Either way, we ensure that the assignment to $H_{N-i},\dots,H_1$ incurs at most 1 change, and hence the overall labeling to $H_1,\dots,H_N$ is a valid concept in $\mcA_{N-i}$.
        
        \medskip\paragraph{Case 2: $S$ has less than $k$ points from each of $H_1,\dots,H_{N-i}$. } Consider 
        $$
        I = \{j \le N-i: \text{$S$ contains at least one point of $H_j$}\}.$$ 
        Note that $|I| \le n_h \le k$. Furthermore, for any such $j \in I$, at most one element in $S$ that is in $H_j$ can be labeled $1$ in $b_h$. For every $j \in I$, if among the elements in $S$ that are in $H_j$, there is an $x$ that is labeled as 1 in $b_h$, we assign $H_j$ the one-hot vector that labels $x$ as 1. Otherwise, we assign $H_j$ to be the one-hot vector where the 1 is at an arbitrary head point in $H_j$ not in $S$ (such a point exists because $S$ has strictly less than $k$ points from any head). Now, consider the indices in $\{N-i,N-i-1,\ldots,1\}$ that remain to be assigned a label (these are precisely the indices not in $I$). The indices in $I$ induce a partition of $\{N-i,N-i-1,\ldots,1\}$ into at most $k+1$ groups. Namely, if $I = \{j_1,\dots,j_{|I|}\}$ in decreasing order, these groups are $\{N-i,\dots, j_1\}$, $\{j_1-1,\dots,j_2\}, \dots, \{j_{|I|}-1,\dots,1\}$. For any every $j_l \in I$, consider the partition ending in $j_l$: we label the head set at every index in this partition identically as the one-hot vector we assigned to $H_{j_l}$. Finally, we label every head set in the last partition $\{j_{|I|}-1,\dots,1\}$ identically also to the one-hot vector assigned to $H_{j_{|I|}}$. We can verify that this leads to assigning a one-hot vector to each of $H_{N-i},\dots,H_{1}$, in a way that incurs at most $k$ changes. Thus, our overall labeling to $H_1,\dots,H_N$ corresponds to a valid concept in $\mcA_{N-i}$.
    \end{proof}

    Now, consider any scenario other than when $T=H_{N-i}, b=0^k$. For the decomposition of $T$ as in \eqref{eqn:decomposition-head-tail}, we already argued that there must exist some $j \in \{1,2,\dots,N-i\}$, $c_h \in \mcA_j$ and $c_t \in \mcB_j$ such that $c_h$ labels $\{x_1,\dots,x_{n_h}\}$ as $b_h$ and $c_t$ labels $\{y_1,\dots,y_{n_t}\}$ as $b_t$. Then, $\{y_1,\dots,y_{n_t}\}$ together with the labeling $b_t$ satisfy the condition of \Cref{claim:tail-set}. Thus, there are at least $(w_{N-i}+1)^k$ different concepts in $\mcB_{N-i}$ that are consistent with this labeling. Similarly, $\{x_1,\dots,x_{n_h}\}$ together with the labeling $b_h$ satisfy the condition of \Cref{claim:head-set}. Thus, there is at least one concept in $\mcA_{N-i}$ that is consistent with this labeling. Since $\mcC_{N-i} = \mcA_{N-i} \otimes \mcB_{N-i}$, we can conclude that there are at least $(w_{N-i}+1)^k$ concepts in $\mcC_{N-i}$ consistent with the labeling $b$ on $T$.

    We will now argue that the choice of $T=H_{N-i}, b=0^k$ retains strictly less than $(w_{N-i}+1)^k$ concepts from $\mcC=\cup_{j=1}^{N-i}\mcC_i$. In particular, recall that every concept in $\mcC_{N-i}$ labels $H_{N-i}$ as a one-hot vector. Thus, all these concepts are removed in \Cref{line:restrict-C}. The number of remaining concepts can then be at most $\sum_{j=1}^{N-i-1}|\mcC_j|$. As with the rectangles construction in the section above, it is also the case here that the number of concepts in $\mcC_i$ dominates the total number of concepts in $\mcC_1,\dots,\mcC_{i-1}$. In \cref{sec:domination-appendix}, we show an even stronger domination result implying $\sum_{j=1}^{N-i-1}|\mcC_j| < (w_{N-i}+1)^k$:

    \begin{claim}[$\mcC_{i}$ dominates $\mcC_1,\dots,\mcC_{i-1}$]
    \label{claim:domination-general-case}
        For any $i \in \{1,\dots,N\}$,
        \begin{equation*}
            \sum_{j=1}^{i} |\mcC_{j}| \le w_{i}^{4k}. %
        \end{equation*}
        In particular, for $i \in \{2,\dots,N\}$, $\sum_{j=1}^{i-1} |\mcC_{j}| \le w_{i-1}^{4k} < \left(w_{i} + 1\right)^k < \left(w_{i} + 1\right)^{2k} = |\mcB_i| < |\mcC_i|$.
    \end{claim}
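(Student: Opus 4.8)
The plan is to control each level's size through the product structure $\mcC_j = \mcA_j \otimes \mcB_j$, giving $|\mcC_j| = |\mcA_j|\cdot|\mcB_j|$, bounding the two factors separately and then summing over $j$. By construction $|\mcB_j| = (w_j+1)^{2k}$ exactly: a concept of $\mcB_j$ is specified by a prefix length in each of the $2k$ rows of $T_j$, and these prefixes determine its labels on all other tail sets. For $\mcA_j$, a concept is determined by its restriction to $H_1,\dots,H_j$ (it vanishes on $H_{j+1},\dots,H_N$), and on each $H_\ell$ with $\ell\le j$ it is either $0^k$ or one of $k$ one-hot vectors; so the crude bound $|\mcA_j| \le (k+1)^j$ already suffices --- condition (iv) (the ``at most $k$ changes'' restriction) only shrinks $\mcA_j$, so it can be discarded for the upper bound.

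Next I would combine these. Since $w_j$ is increasing in $j$, each term of $\sum_{j=1}^i|\mcC_j| = \sum_{j=1}^i |\mcA_j|(w_j+1)^{2k}$ is at most $(k+1)^i(w_i+1)^{2k}$, hence $\sum_{j=1}^i|\mcC_j| \le i\,(k+1)^i\,(w_i+1)^{2k}$. It then remains to check $i\,(k+1)^i\,(w_i+1)^{2k} \le w_i^{4k}$. Using $w_i = (8k)^{2^{2i}}$ (equivalently $w_i = 2^{\log(8k)\cdot 2^{2i}}$), I would bound $(w_i+1)^{2k}\le (2w_i)^{2k} = 2^{2k}w_i^{2k}$ and show the remaining factor satisfies $i\,(k+1)^i\, 2^{2k} \le w_i^{2k}$; bounding the left side crudely by $(8k)^{2i+k}$ and comparing exponents reduces this to the elementary inequality $2i+k \le 2k\cdot 2^{2i}$, true for all $i,k\ge 1$. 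Multiplying the two resulting ``$w_i^{2k}$'' factors yields $\sum_{j=1}^i|\mcC_j|\le w_i^{4k}$.

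For the ``in particular'' chain I would instantiate this bound at $i-1$, giving $\sum_{j=1}^{i-1}|\mcC_j|\le w_{i-1}^{4k}$, and then use $w_i = w_{i-1}^4$ to rewrite $w_{i-1}^{4k} = w_i^k < (w_i+1)^k < (w_i+1)^{2k} = |\mcB_i|$; finally $|\mcB_i| < |\mcC_i|$ because $|\mcC_i| = |\mcA_i|\,|\mcB_i|$ and $|\mcA_i|\ge k\ge 2$ (the $k$ one-hot labelings of $H_i$ together with $0^k$ on every lower head set are $k$ distinct concepts in $\mcA_i$). Stringing these together reproduces the stated chain and, in particular, the bound $\sum_{j=1}^{i-1}|\mcC_j| < (w_i+1)^k$ used in the main argument.

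I do not expect a real obstacle here: the only mildly delicate steps are realizing that the bound on $|\mcA_j|$ need not engage with condition (iv) at all, and keeping the chain of crude estimates honest. There is so much room precisely because $w_j = w_{j-1}^4$ grows doubly exponentially --- it was chosen so that $w_{i-1}^{4k} = w_i^k$ sits far below $|\mcB_i| = (w_i+1)^{2k}$, leaving ample slack to absorb the comparatively tiny head-set factor $(k+1)^j$ and the sum-over-levels factor $i$.
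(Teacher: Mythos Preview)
Your proof is correct and follows the same overall skeleton as the paper: factor $|\mcC_j| = |\mcA_j|\,|\mcB_j|$, use $|\mcB_j| = (w_j+1)^{2k}$, bound $|\mcA_j|$, sum, and compare with $w_i^{4k}$ via the doubly-exponential growth $w_i = (8k)^{2^{2i}}$; the ``in particular'' chain is handled identically, with $w_{i-1}^{4k} = w_i^k$ and $|\mcA_i| \ge k \ge 2$.

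The one genuine difference is how you bound $|\mcA_j|$. The paper invokes condition~(iv) (at most $k$ change-points among $H_1,\dots,H_{j-1}$) to obtain the polynomial bound $|\mcA_j| \le (j-1)^k (k+1)^{k+1}$, and in fact remarks that the choice $w_i = 2^{\log(8k)\cdot 2^{2i}}$ was tuned to make $(8k(i-1)w_{i-1})^{2k} \le w_i^k$ go through with this bound. You instead discard condition~(iv) and use the crude exponential bound $|\mcA_j| \le (k+1)^j$. This is a legitimate simplification: because $w_i$ is doubly exponential in $i$, even an exponential-in-$j$ head factor is dwarfed, and your reduction to the elementary inequality $2i + k \le 2k\cdot 2^{2i}$ confirms there is enough slack. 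What the paper's sharper bound on $|\mcA_j|$ buys is only a cleaner intermediate estimate and a closer match to the stated rationale for the choice of $w_i$; what your route buys is that the counting argument never touches the change-point structure at all.
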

    \begin{proof}
        Since $\mcC_j = \mcA_j \otimes \mcB_j$, we have that $|\mcC_j|=|\mcA_j||\mcB_j|$. Recall that $|\mcB_j|=(w_j+1)^{2k}$. To bound $|\mcA_j|$, we recall the conditions (i),(ii), (iii) and (iv) from above that a concept in $\mcA_j$ must satisfy. In particular, a concept can make labeling changes in at most $k$ locations in $\{1,2,\dots,j-1\}$---there are at most $(j-1)^k$ possible choices for these locations. Each choice of change locations partitions $H_1,\dots,H_j$ into at most $k+1$ buckets, where the labeling on each bucket should be the same. Furthermore, we may label each bucket with one of $k+1$ choices, corresponding to $0^k$ and one of $k$ one-hot vectors (with the exception of the last bucket that includes $H_j$, which may only be assigned a one-hot vector). In total, we have that
        \begin{align*}
            |\mcC_j| = |\mcA_j||\mcB_j| = (w_j+1)^{2k}|\mcA_j| \le  (w_j+1)^{2k}(j-1)^k(k+1)^{k+1} %
            \le (4jkw_j)^{2k},
        \end{align*}
        so that for any $i \in \{1,\dots,N\}$,
        \begin{align*}
            \sum_{j=1}^{i} |\mcC_{j}| \le (4ki)^{2k}\sum_{j=1}^{i}w_j^{2k} \le (8kiw_{i})^{2k}.
        \end{align*}
        In the last inequality, we used that $w_{j+1} \ge 2w_j$. Finally, using that for any $j \ge 1$, $8kj \le 2^{\log(8k)\cdot2^{2j}}=w_j$, we get that
        \begin{align}
            \label{eqn:step:pow-bound}
            \sum_{j=1}^{i} |\mcC_{j}| \le w_{i}^{4k}.%
        \end{align}
        In particular, for $i \in \{2,\dots,N\}$,
        \begin{align*}
            \sum_{j=1}^{i-1} |\mcC_{j}| \le w_{i-1}^{4k} = \left(w_{i-1}^{4}\right)^k = w_i^k < (w_i + 1)^k < |\mcC_i|. 
        \end{align*}
    \hfill$\blacksquare$

We remark that our choice of $w_i = 2^{\log(8k) \cdot 2^{2i}}$ was made so as to satisfy $(8k(i-1)w_{i-1})^{2k} \le w_i^k$ in the calculation above.
    \end{proof}
    
    With \cref{claim:domination-general-case}, this completes the inductive proof of \eqref{eqn:inductive-hypothesis-general}. In particular, for $i=N-1$, we have shown that $S=H_2 \cup H_3 \cup \dots H_N$, and $\mcC = \mcC_1$. In the $(N-1)^{\text{th}}$ iteration, the algorithm will choose some $k$ points from $\mcX \setminus S$ to add to the teaching set $S$ (repeating a point that is already in $S$ is strictly suboptimal, as is choosing less than $k$ points). Thus, the final teaching set that is returned has size at least $kN$.

    \paragraph{VC dimension of the concept class. } We will show that the VC dimension of $\mcF_N$ is at most $4k+1$. Roughly, this follows from how any shattered set may not contain too many head points, or we could choose a labeling that forces more than $k$ changes, nor may the set contain too many tail points, or we could choose an impossible labeling due to the prefix structure. 

    \begin{lemma}
    The VC dimension of $\mcF_N$ is at most $4k+1$
\end{lemma}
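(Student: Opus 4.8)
Our plan is to split the domain as $\mcX = \mathcal{H} \sqcup \mathcal{T}$, where $\mathcal{H} = H_1 \cup \dots \cup H_N$ is the set of all head points and $\mathcal{T} = T_1 \cup \dots \cup T_N$ the set of all tail points, and to bound separately the number of head points and the number of tail points in any shattered set. Indeed, if $Y$ is shattered by $\mcF_N$ and we write $Y_H = Y \cap \mathcal{H}$, $Y_T = Y \cap \mathcal{T}$, then every labeling of $Y_H$ extends (arbitrarily on $Y_T$) to a labeling of $Y$ realized by some concept of $\mcF_N$, whose restriction to $\mathcal{H}$ lies in the projection $\bigcup_j \mcA_j$ of $\mcF_N$ onto the head points; hence $Y_H$ is shattered by $\bigcup_j \mcA_j$, and symmetrically $Y_T$ is shattered by $\bigcup_j \mcB_j$. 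It therefore suffices to show that $\bigcup_j \mcB_j$ has VC dimension at most $2k$ and $\bigcup_j \mcA_j$ has VC dimension at most $2k+1$, and to add the two bounds.

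\textit{Tail points.} We claim $Y_T$ contains at most one point from each of the $2k$ rows, which immediately gives $|Y_T| \le 2k$. Fix a row index $a$ and map each point $t_{j,a,b}$ of that row to the real number $b / w_j$. Using that $w_j$ divides $w_{j'}$ for $j \le j'$, together with the expansion and contraction rules defining $\mcB_i$, a direct calculation shows that for every $i$ and every concept $f \in \mcB_i$, if $f$ assigns row $a$ of $T_i$ the prefix of length $\ell$, then $f$ labels a point of row $a$ by $1$ precisely when its image is at most $\ell / w_i$. Thus, restricted to row $a$, the class $\bigcup_j \mcB_j$ consists only of threshold functions on the line, and threshold functions have VC dimension $1$; so no two points of a single row are shattered.

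\textit{Head points.} First, since any concept labels a single head set $H_m$ either as $0^k$ or as a one-hot vector, it realizes at most $p + 1 < 2^p$ distinct labelings on any $p \ge 2$ points of $H_m$; hence $Y_H$ meets each head set in at most one point. It thus suffices to show that a set of $2k + 2$ head points lying in distinct head sets cannot be shattered. Write these points as $x_1, \dots, x_{2k+2}$ with $x_s \in H_{m_s}$, $m_1 < \dots < m_{2k+2}$, and let $c_s \in [k]$ be such that $x_s = h_{m_s, c_s}$. Consider the all-ones labeling, and the labeling that is $1$ on $x_s$ exactly for even $s$; note that both label $x_{2k+2}$ by $1$. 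Any concept $f \in \bigcup_j \mcA_j$ realizing a labeling in which $x_{2k+2}$ is labeled $1$ must label $H_{m_{2k+2}}$ by a one-hot vector, so if $f \in \mcA_i$ then $m_{2k+2} \le i$; consequently, among the positions $1, \dots, m_{2k+2}-1$, the concept $f$ has at most $k$ ``change positions'' (positions $p$ at which $f$'s label on $H_p$ differs from its label on $H_{p+1}$), by the defining constraint (iv) on $\mcA_i$. In particular, the number of indices $s \le 2k+1$ for which $f$'s labels on $H_{m_s}$ and $H_{m_{s+1}}$ differ is at most $k$, since each such $s$ forces a change position in $\{m_s, \dots, m_{s+1}-1\}$ and these ranges are disjoint. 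Now the all-ones labeling forces $f$ to label $H_{m_s}$ as the one-hot vector with its $1$ at $c_s$, and hence forces $f$'s labels on $H_{m_s}$ and $H_{m_{s+1}}$ to differ whenever $c_s \ne c_{s+1}$; the other labeling places exactly one of $x_s, x_{s+1}$ in the $1$-set, forcing such a difference whenever $c_s = c_{s+1}$. Among the $2k+1$ indices $s \in \{1, \dots, 2k+1\}$, either at least $k+1$ satisfy $c_s \ne c_{s+1}$ or at least $k+1$ satisfy $c_s = c_{s+1}$, so one of the two labelings forces more than $k$ such differences and is unrealizable. Therefore $\bigcup_j \mcA_j$ has VC dimension at most $2k+1$, and combining with the tail bound, $\mcF_N$ has VC dimension at most $(2k) + (2k+1) = 4k+1$.

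The crux is the head-point argument: the forced changes must be kept strictly below the concept's topmost nonzero head set, so that they count against its ``at most $k$ changes'' budget rather than being absorbed by the additional change incurred when the one-hot pattern drops to $0^k$; this is exactly why both test labelings keep $x_{2k+2}$ labeled $1$. Given that, the two-labeling dichotomy turns ``the coordinate sequence $c_1, \dots, c_{2k+2}$ has at least $k+1$ equal consecutive pairs, or at least $k+1$ unequal ones'' into an unrealizable labeling no matter how the $c_s$ are chosen.
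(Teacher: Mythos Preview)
Your proof is correct and follows the same overall decomposition as the paper: bound the head points in a shattered set by $2k+1$ and the tail points by $2k$, then add. The execution differs in a couple of pleasant ways.

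For the tail points, the paper argues directly that two points in the same row cannot be shattered, via a case analysis (Observation~\ref{observation:f-and} and Claim~\ref{claim:shatter-tail}): given $t_{i,a,b_1}$ and $t_{j,a,b_2}$ with $i \le j$, it compares $f(i,j,b_2)$ to $b_1$ and in each case exhibits an unrealizable pair of labels. Your threshold embedding $t_{j,a,b} \mapsto b/w_j$ is a neater packaging of the same fact: it makes the ``prefix + expand/contract'' structure transparent by showing each $\mcB_i$ restricted to a row is literally a threshold family on the line, so VC dimension $1$ follows immediately.

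For the head points, the paper produces a \emph{single} unrealizable labeling, determined by the column pattern: it pairs off $(x_1,x_2),(x_3,x_4),\dots$ and labels each pair $(1,0)$ if in the same column and $(1,1)$ otherwise, so that each of the $\lfloor m/2 \rfloor$ pairs forces a change. You instead propose two fixed labelings (all-ones and alternating), observe that each forces a change at a consecutive pair $(s,s+1)$ depending on whether $c_s \ne c_{s+1}$ or $c_s = c_{s+1}$, and pigeonhole over the $2k+1$ consecutive pairs to guarantee that one of the two labelings forces $k+1$ changes. Both arguments hinge on the same point you correctly isolate at the end: the topmost point must be labeled $1$ so that the forced changes lie within the $\{1,\dots,i-1\}$ window where the $k$-change budget applies.
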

\begin{proof}
We separately bound the number of head points and tail points in any shattered set:

    \begin{claim}[Few Head Points]
        \label{claim:shatter-head}
        Any set shattered by $\mcF_N$ may contain at most $2k+1$ head points.
    \end{claim}
    \begin{proof}
        If the shattered set contains at least two points from a single head $H_i$, then we know that this set cannot be shattered, as no concept in the class labels both these points simultaneously as $1$. Thus, the set can only contain at most a single point from each head. Let these head points be $\{x_1,\dots,x_m\}$---we will now construct a label pattern on these points that cannot be realized by $\mcF_N$ if $m$ is larger than $2k+1$. Without loss of generality, suppose that the points are sorted in decreasing order of the index of the corresponding head that they appear in (i.e., $x_1 \in H_{i_1},\dots,x_m \in H_{i_m}$, where $i_1 > i_2 > \dots > i_m$). Let us pair up these points into $\lfloor\frac{m}{2} \rfloor$ pairs as $(x_1,x_2), (x_3,x_4),\dots,(x_{2\lfloor\frac{m}{2} \rfloor-1},x_{2\lfloor\frac{m}{2} \rfloor})$. We will determine labels for points in each pair based on the columns that these points lie in (see \cref{fig:general-lb}). If $x_i$ and $x_{i+1}$ are in the same column, then we will ask for $x_i$ to be labeled as 1, and $x_{i+1}$ to be labeled as $0$. Otherwise, $x_i$ and $x_{i+1}$ are in different columns, and we will ask for both of them to be labeled as 1. 
        Notice that the suggested label pattern necessitates a label change at the corresponding heads at each pair; since there are $\lfloor\frac{m}{2} \rfloor$ pairs, and no concept is allowed more than $k$ label changes, $\lfloor\frac{m}{2} \rfloor \le k \implies m \le 2k+1$.
    \end{proof}

    We move on to arguing that no shattered set can contain too many tail points. For this, we will require a structural property about the labels that can be realized at the same row $a \in \{1,2,\dots,2k\}$ for two different tail sets $T_i$ and $T_j$, where $i < j$. For $y \in \{1,2,\dots,w_j\}$, let $f(i, j, y)$ denote the column in $T_{i,a}$ that the point $t_{j,a,y}$ contracts down to; namely $f(i, j, y) \triangleq \left\lceil \frac{y}{w_j/w_i}\right\rceil$.

    \begin{observation}\label{observation:f-and}
        For any concept $c$ in $\mcF_N$, integers $1 \le i < j \le N$, row $1 \le a \le 2k$, and column $1 \le b \le w_i$, it holds that
        \begin{equation}
            \label{eqn:f-and}
            c(t_{i,a,b}) = \AND(\{c(t_{j, a, y}): f(i,j,y)=b\}).
        \end{equation}
    \end{observation}
    \begin{proof}
        First, suppose $c \in \mcC_{1} \cup \dots \cup \mcC_i$. Then, by the way that $c$ is constructed, the label that it assigns to $t_{i,a,b}$ is copied over to all the points $\{t_{j, a, y}: f(i,j,y)=b\}$, and hence \eqref{eqn:f-and} holds. Now, suppose that $c \in \mcC_{j} \cup \dots \cup \mcC_N$. Again, by construction, the labels that $c$ assigns to $\{t_{j, a, y}: f(i,j,y)=b\}$ are contracted all the way down via $\AND$s to $t_{i,a,b}$. Finally, suppose that $c \in \mcC_{l}$ where $l \in \{i+1,\dots,j-1\}$. This case essentially follows by combining the reasoning for the preceding two cases. In more detail, consider the ``contraction path'' of the set $Y=\{t_{j,a,y}:f(i,j,y)=b)\}$ down to the point $t_{i,a,b}$---this path intersects $T_{l,a}$ at a subset of columns $S \subset \{1,2,\dots,w_l\}$, such that every $d \in S$ maps to a distinct batch $E_d$ of $w_j/w_l$ points in $Y$, where these batches are disjoint, and together comprise all of $Y$. Furthermore, the label that $c$ assigns to $t_{l,a,d}$ gets copied out back up to $E_d$. Because of this, $c(t_{l,a,d})$ is indeed the $\AND$ of the labels that $c$ assigns to $E_d$. Moreover, the labels that $c$ assigns to $T_{l,a}$ at the columns in $S$ are also contracted down to $t_{i,a,b}$ via $\AND$s. Together, we get that $c(t_{i,a,b})=\AND(\{c(t_{j,a,y}):f(i,j,y)=b\})$, as desired.
    \end{proof}
    We can now conclude that no set that is shattered by $\mcF_N$ may have more than $2k$ tail points.
    
    \begin{claim}[Few Tail Points]
        \label{claim:shatter-tail}
        Any set shattered by $\mcF_N$ may contain at most $2k$ tail points. 
    \end{claim}
    \begin{proof}
        For the sake of contradiction, if a shattered set contains at least $2k+1$ tail points, then there must be at least two points that correspond to the same row $a$; these points are either within the same $T_i$, or across some $T_i$ and $T_j$. Consider a pair of two such points, $t_{i,a, b_1}$ and $t_{j,a,b_2}$, for $i \le j$. If $i = j$, then without loss of generality, supposing $b_1 < b_2$, it is impossible for $t_{i,a,b_1}$ to be labeled 0 while $t_{j,a,b_2}$ is labeled 1---this is because of the prefix nature of how concepts label rows of tail points. Otherwise, suppose $i < j$. We will use \cref{observation:f-and} to show how it is not possible to realize at least one pattern of labels on $t_{i,a,b_1}$, $t_{j,a,b_2}$

        \medskip\noindent\textbf{Case 1: $f(i,j,b_2) \le b_1$. } 
        We claim that no concept simultaneously labels $t_{j,a,b_2}$ as 0 and $t_{i,a,b_1}$ as 1. To see this, consider a concept $c$ that labels $t_{j,a,b_2}$ as 0, and let $f(i,j,b_2)=b_3 \le b_1$. From \Cref{observation:f-and}, we then know that $c(t_{i,a,b_3})=0$. We then conclude that $c$ cannot label $t_{i,a,b_1}$ as 1, by the prefix nature required of $c$. %

        \medskip\noindent\textbf{Case 2: $f(i,j,b_2) > b_1$. } We claim that no concept simultaneously labels $t_{j,a,b_2}$ as 1 and $t_{i,a,b_1}$ as 0. To see this, consider a concept $c$ that labels $t_{j,a,b_2}$ as 1. Because $f(i,j,b_2) > b_1$, by \Cref{observation:f-and}, we know that $c(t_{i,a,b_1})$ is an $\AND$ of labels that $c$ assigns to a batch of points strictly to the left of $t_{j,a,b_2}$. All these labels must be 1 by the prefix nature of $c$, and hence we conclude $c(t_{i,a,b_1})=1$.
    \end{proof}
    Combining \cref{claim:shatter-head,claim:shatter-tail}, we conclude that any set shattered by $\mcF_N$ must be of size at most $4k+1$, which bounds the VC dimension of $\mcF_N$ by $4k+1$.
    \end{proof}
    
    This proves Part 1 of \Cref{theorem:general-k-lower-bound}.

    \paragraph{Size of the concept class and domain. } Using the domination result of \Cref{claim:domination-general-case}, we get
    \begin{align}
        \label{eqn:general-C-size-bound}
        |\mcF_N| = \sum_{j=1}^N|\mcC_j| \le w_N^{4k} = 2^{4k\log(8k) \cdot 2^{2N}}. %
    \end{align}
    Similarly, the size of the domain $\mcX$ is at most 
    \begin{align*}
        \sum_{i=1}^N (k+2kw_{i}) = k\sum_{i=1}^N (2w_{i}+1) \le 3k\sum_{i=1}^N w_{i} \le 6kw_N = 6k \cdot 2^{\log(8k) \cdot 2^{2N}}.%
    \end{align*}   
    This proves Part 2 of \Cref{theorem:general-k-lower-bound}.

    \paragraph{Concluding $kN = \Omega(\log(\log(|\mcF_N|)))$.} Finally, we may conclude how $|\mcF_N|$ relates to $kN$ as,
    \begin{align*}
        \frac{1}{14} \cdot \log(\log(|\mcF_N|)) &\le \frac{1}{14} \cdot (\log(4k\log(8k)) + 2N) \qquad (\text{using \eqref{eqn:general-C-size-bound}}) \\
        & \le \frac{1}{14} \cdot (\log(4kN \cdot \log(8kN))+2kN) \le \frac{1}{14} \cdot (12kN + 2kN) = kN.
    \end{align*}
    This completes the proof of Part 3 of \Cref{theorem:general-k-lower-bound}, completing its proof.

\acks
This work was supported by the National Defense Science \& Engineering Graduate
(NDSEG) Fellowship Program, Tselil Schramm’s NSF CAREER Grant no. 2143246, Gregory
Valiant’s and Moses Charikar's Simons Foundation Investigator Awards, and NSF award AF-2341890. Nikita Zhivotovskiy would like to thank Benny Sudakov and Istv\'{a}n Tomon for a number of insightful discussions regarding upper bounds for teaching sets.

\bibliography{ref}

\clearpage
\appendix

\section{Bounding $\tdmin$ for our constructions}\label{app:tdmin}

Our constructions are not counterexamples to the general $\tdmin = O(d)$ conjecture as they have small $\tdmin$. 

\textit{Concept class from \cref{theorem:rectangle-lower-bound} ($k=1$).} The $\tdmin$ for this class is at most 2: only one concept simultaneously labels $z_1$ as $1$, and the point immediately above as $0$.

\textit{Concept class from \cref{theorem:general-k-lower-bound} ($k \ge 2$).} The $\tdmin$ for this class is at most $2k+1$: set the rightmost point in the $k$ rows of $T_N$ as 1, and set one point to $1$ in each of $H_{N-k},\dots,H_{N}$ in a way that forces all the allowed changes and thus determines all other values.

\end{document}